\newtheorem{proposition}{Proposition}
\newcommand{\FFHQ}{FFHQ}
\newcommand{\best}[1]{\mathbf{#1}}
\newcommand{\second}[1]{\underline{#1}}
\def\etal{et~al.}			  % and others, and co-workers
\def\eg{e.g.,~}               % for example
\def\ie{i.e.,~}               % that is, in other words
\def\etc{etc}                 % and other things, and so forth
\def\cf{cf.~}                 % compare
\def\vs{vs.~}                 % against
\DeclareMathOperator*{\argmin}{\arg\!\min}
\newlength\paramargin
\newlength\figmargin
\newlength\secmargin
\newlength\figcapmargin
\newlength\tabcapmargin
\newlength\tabbotmargin
\newcommand{\Paragraph}[1]
{\vspace{1mm} \noindent\textbf{#1}}
\newcommand{\secref}[1]{Section~\ref{sec:#1}}
\newcommand{\figref}[1]{Figure~\ref{fig:#1}} 
\newcommand{\tabref}[1]{Table~\ref{tab:#1}}
\long\def\ignorethis#1{}
\def\argmin{\mathop{\rm argmin}}%
\def\bx{\bm{x}} %
\def\reg{R_{\mathrm{LC}}}
\newcommand{\eat}[1]{} % ignores argument
\def\pd{p_d(\bx)} %
\def\pg{p_g(\bx)} %
\def\pz{p_z(\bx)} %
\def\Dx{D(\bx)} %
\def\Dsx{D^*(\bx)} %
\def\E{\mathop{\mathbb{E}}} %
\def\D{\mathcal{T}} %
\def\bx{\bm{x}} %
\def\bz{\bm{z}} %
\begin{document}
\definecolor{mygray}{gray}{0.4}
\newenvironment{mytiny}{\color{mygray}\tiny}{}

% --- title --- %
\title{Regularizing Generative Adversarial Networks under Limited Data}

% --- author --- %
\author{{Hung-Yu Tseng\thanks{}\hspace{5pt}}$^2$,
Lu Jiang$^1$,
Ce Liu$^1$,
Ming-Hsuan Yang$^{1,2,4}$,\\
Weilong Yang$^3$ \vspace{1.5mm}\\
$^{1}$Google Research\hspace{20pt}$^{2}$University of California, Merced\hspace{20pt}$^{3}$Waymo\hspace{20pt}$^{4}$Yonsei University
}

% --- teaser ---%
\twocolumn[{
\renewcommand\twocolumn[1][]{#1}
\maketitle
\begin{center}
    \centering
    \includegraphics[width=\linewidth]{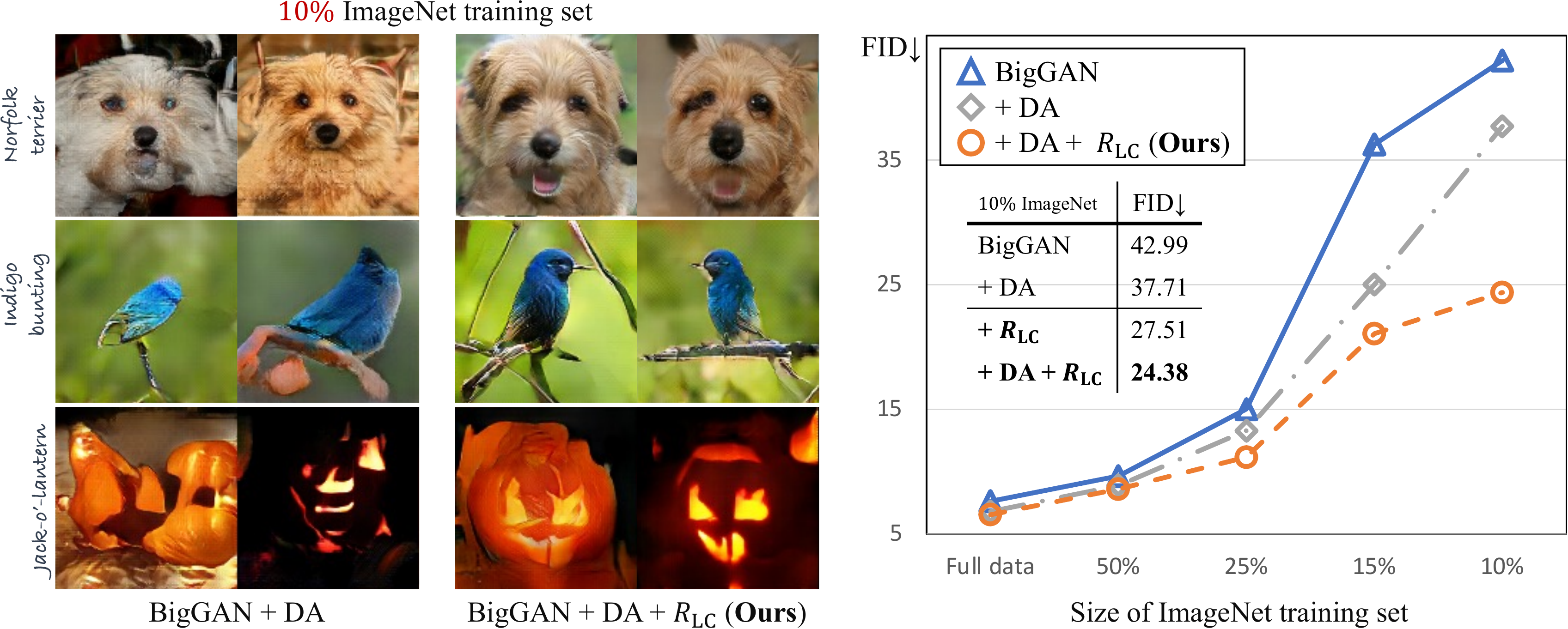}
    \captionof{figure}{\textbf{Regularizing GANs under limited training data.}  (\textit{left}) Image generation trained on $10\%$ ImageNet training set; (\textit{right}) FID scores \vs ImageNet training set size. The proposed regularization method 1) addresses the limited training data issue for the GAN models, and 2) is empirically complementary to the recent data augmentation approaches~\cite{ada,diffaug}.
    }
    \vspace{2.5mm}
    \label{fig:teaser}
\end{center}
}]
{
  \renewcommand{\thefootnote}%
    {\fnsymbol{footnote}}
  \footnotetext[1]{Work done during HY's internship at Google Research.}
}

% --- draft --- %
\begin{abstract}
   % 1. limited data issue of GANs
   % 2. what we propose
   % 3. how we validate
   % 4. results
   Recent years have witnessed the rapid progress of generative adversarial networks~(GANs).
   However, the success of the GAN models hinges on a large amount of training data.
   This work proposes a regularization approach for training robust GAN models on limited data.
   We theoretically show a connection between the regularized loss and an $f$-divergence called LeCam-divergence, which we find is more robust under limited training data.
   Extensive experiments on several benchmark datasets demonstrate that the proposed regularization scheme 1) improves the generalization performance and stabilizes the learning dynamics of GAN models under limited training data, and 2) complements the recent data augmentation methods.
   These properties facilitate training GAN models to achieve state-of-the-art performance when only limited training data of the ImageNet benchmark is available. 
   The source code is available at \url{https://github.com/google/lecam-gan}.
\end{abstract}
\section{Introduction}

% --- Intro to GAN --- %
Generative adversarial networks~(GANs)~\cite{wgan,biggan,goodfellow2014generative,lsgan} have made significant progress in recent years on synthesizing high-fidelity images.
The GAN models are the cornerstone techniques for numerous vision applications, such as data augmentation~\cite{frid2018gan,frid2018synthetic}, domain adaptation~\cite{hoffman2018cycada,hsu2020progressive}, image extrapolation~\cite{InOut,teterwak2019boundless}, image-to-image translation~\cite{huang2018multimodal,lee2020drit++,zhu2017unpaired}, and visual content creation~\cite{abdal2020image2stylegan++,ganpaint,huang2020semantic,huh2020transforming,tseng2020modeling,tseng2020retrievegan}.

% --- Problem with limited data and current solutions on data augmentation --- %
The success of the GAN methods heavily relies on a large amount of diverse training data which is often labor-expensive or cumbersome to collect~\cite{webster2019detecting}.
As the example of the BigGAN~\cite{biggan} model presented in \figref{teaser}, the performance significantly deteriorates under the limited training data. 
Consequently, several very recent approaches~\cite{ada,diffaug,zhao2020image} have been developed to address the data insufficiency issue.
A representative task in this emerging research direction aims to learn a robust class-conditional GAN model when only a small proportion of the ImageNet data~\cite{imagenet} are available for the training.
Generally, existing methods exploit data augmentation, either conventional or differentiable augmentation, to increase the diversity of the limited training data.
These data augmentation approaches have shown promising results on several standard benchmarks.

% --- our approach --- %
In this paper, we address the GAN training task on \emph{limited data} from a different perspective: model regularization. 
Although there are numerous regularization techniques for the GAN models in the literature~\cite{wgangp,gp0,spectral,sonderby2016amortised,zhou2018don}, none of them aim to improve the generalization of the GAN models trained on limited data.
In contrast, our goal is to \emph{learn robust GAN models on limited training data that can generalize well on out-of-sample data}.
To this end, we introduce a novel regularization scheme to modulate the discriminator's prediction for learning a robust GAN model. 
Specifically, we impose an $\ell_2$ norm between the current prediction of the real image and a moving average variable that tracks the historical predictions of the generated image, and vice versa. 
We theoretically show that, under mild assumptions, the regularization transforms the WGAN~\cite{wgan} formulation towards minimizing an $f$-divergence called LeCam-divergence~\cite{le2012asymptotic}.
We find that the LeCam-divergence is more robust under the limited training data setting.

% --- experiments --- %
We conduct extensive experiments to demonstrate the three merits of the proposed regularization scheme. 
First, it improves the generalization performance of various GAN approaches, such as BigGAN~\cite{biggan} and StyleGAN2~\cite{stylegan2}.
Second, it stabilizes the training dynamics of the GAN models under the limited training data setting. 
Finally, our regularization approach is empirically complementary to the data augmentation methods~\cite{ada,diffaug}. 
As presented in \figref{teaser}, we obtain state-of-the-art performance on the limited (\eg $10\%$) ImageNet dataset by combining our regularization (\ie $\reg$) and the data augment method~\cite{diffaug}.

\section{Related Work}
\label{sec:related}

\Paragraph{Generative adversarial networks.}
Generative adversarial networks~(GANs)~\cite{wgan,biggan,goodfellow2014generative,relativisticgan,stylegan2,lsgan,zhang2019sagan} aim to model the target distribution using adversarial learning.
Various adversarial losses have been proposed to stabilize the training or improve the convergence of the GAN models, mainly based on the idea of minimizing the $f$-divergence between the real and generated data distributions~\cite{nowozin2016f}.
For example, Goodfellow~\etal~\cite{goodfellow2014generative} propose the saturated loss that minimizes the JS-divergence between the two distributions.
Similarly, the LSGAN~\cite{lsgan} formulation leads to minimizing the $\chi^2$-divergence~\cite{pearson1900x}, and the EBGAN~\cite{zhao2016energy} approach optimizes the total variation distance~\cite{wgan}.
On the other hand, some models are designed to minimize the integral probability metrics (IPM)~\cite{muller1997integral,song2019bridging}, such as the WGAN~\cite{wgan,wgangp} frameworks.
In this work, we design a new regularization scheme that can be applied to different GAN loss functions for training the GAN models on the limited data.

\Paragraph{Learning GANs on limited training data.}
With the objective of reducing the data collection effort, several studies~\cite{gulrajani2020towards,webster2019detecting} raise the concern of insufficient data for training the GAN models.
Training the GAN models on limited data is challenging because the data scarcity leads to the problems such as unstable training dynamics, degraded fidelity of the generated images, and memorization of the training examples.
To address these issues, recent methods~\cite{ada,tran2020towards,zhang2019consistency,diffaug,zhao2020bcr,zhao2020image} exploit data augmentation as a mean to increase data diversity, hence preventing the GAN models from overfitting the training data.
For example, Zhang~\etal~\cite{zhang2019consistency} augment the real images and introduce a consistency loss for training the discriminator.
The DA~\cite{diffaug} and ADA~\cite{ada} approaches share a similar idea of applying differential data augmentation on both real and generated images, in which ADA further develops an adaptive strategy to adjust the probability of augmentation.
In contrast to prior work, we tackle this problem from a different perspective of \emph{model regularization}.
We show that our method is conceptually and empirically complementary to the existing data augmentation approaches.

\Paragraph{Regularization for GANs.}
Most existing regularization methods for GAN models aim to accomplish two goals: 1) stabilizing the training to ensure the convergence~\cite{gp0,mescheder2017numerics}, and 2) mitigating the mode-collapse issue~\cite{salimans2016improved}.
As the GAN frameworks are known for unstable training dynamics, numerous efforts have been made to address the issue using noise~\cite{jenni2019stabilizing,sonderby2016amortised}, gradient penalty~\cite{wgangp,kodali2017convergence,gp0,roth2017stabilizing}, spectral normalization~\cite{spectral}, adversarial defense~\cite{zhou2018don}, \etc.
On the other hand, a variety of regularization approaches~\cite{berthelot2017began,che2016mode,liu2019normalized,mao2019mode,srivastava2017veegan,yang2019diversity} are proposed to alleviate the model-collapse issue, thus increasing the diversity of the generated images.
Compared with these methods, our work targets a different goal: improving the generalization of the GAN models trained on the \emph{limited training data}.

\Paragraph{Robust Deep Learning.}
Robust deep learning aims to prevent the deep neural networks from overfitting or memorizing the training data.
Recent methods have shown successes in overcoming training data bias such as label noise~\cite{arpit2017closer, han2018co, jiang2020beyond, jiang2018mentornet, liu2020early, northcutt2019confident, ren2018learning, xu2021faster} and biased data distributions~\cite{cheng2020advaug, cui2019class, liang2020simaug, lin2017focal, shu2019meta, tseng2020regularizing, tseng2020cross}.
Recently, few approaches~\cite{bora2018ambientgan,kaneko2020noise,kaneko2019label,thekumparampil2018robustness} have been proposed for learning the robust GAN model. 
While these approaches are
designed to overcome label or image noise in a corrupted training set, we improve the generalization of the GAN models trained on the limited \emph{uncorrupted} training data.
\begin{figure*}[t]
    \centering
    \includegraphics[width=0.8\linewidth]{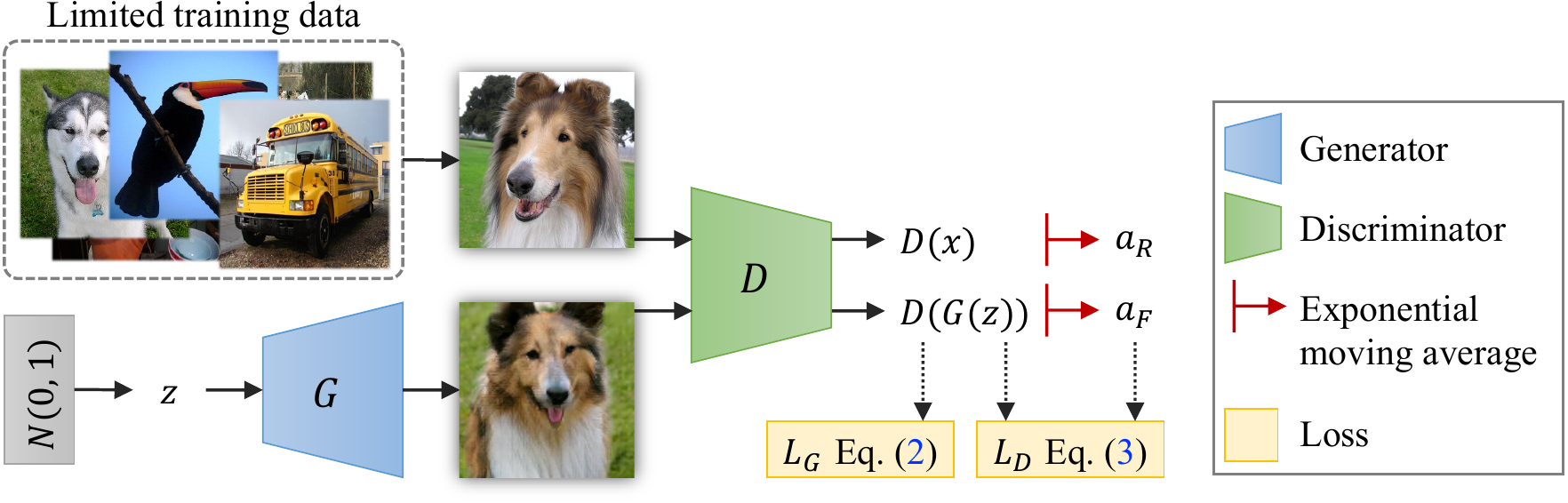}
    \vspace{\figcapmargin}
    \caption{\textbf{Algorithmic overview.} During the GAN training stage, we use the exponential moving average variables, called \emph{anchors}, to track the discriminator predictions. The anchors are then used to compute the regularized discriminator loss described in Eq.~\eqref{eq:loss_d} to improve the generalization performance of the GAN models.
    }
    \vspace{-2mm}
    \label{fig:overview}
\end{figure*}

\section{Methodology}
\label{sec:3}
We first review the GAN models, then detail our regularization scheme.
Finally, we discuss the connection between the proposed method and the LeCam-divergence along with the effect on robust learning under the limited data setting.

\subsection{Generative Adversarial Networks}
\label{sec:gan}
A GAN model consists of a discriminator $D$ and a generator $G$.
Let $V_D$ and $L_G$ denote the training objectives of the discriminator $D$ and generator $G$, respectively.
The training of the GAN frameworks can be generally illustrated as:
{\small
\begin{align}
\max_D V_D, V_D \!=\!& \E_{\bx \sim \D} \big[ f_D(D(\bx)) \big]
\!+\! \E_{\bz \sim p_{\bz}} \big[ f_G(D(G(\bz))) \big]\\
\min_G L_G, L_G \!=\!& \E_{\bz \sim p_{\bz}} \big[ g_G(D(G(\bz))) \big],
\label{eq:gan}
\vspace{-1.5mm}
\end{align}
}\noindent
where $p_\mathbf{z}$ is the prior distribution~(\eg $\mathcal{N}(0, I)$) and $\D$ is the training (observed) image set used to approximate the data distribution. 
The notations $f_D$, $f_G$, and $g_G$ in Eq.~\eqref{eq:gan} represent the mapping functions from which various GAN losses can be derived (\cf~\cite{liu2020generative}).

\subsection{Regularizing GANs under Limited Data}
\label{sec:method}
Our goal is to improve the performance of the GAN models when the training set $\mathcal{T}$ merely contains a limited amount of data, as the example shown in \figref{teaser}.
Different from the existing data augmentation methods~\cite{ada,diffaug}, we approach this problem by incorporating the regularization on the discriminator.
We present the overview of the proposed method in \figref{overview}.
The core idea is to regulate the discriminator predictions during the training phase.
Specifically, we introduce two exponential moving average~\cite{laine2016temporal} variables $\alpha_R$ and $\alpha_F$, called \emph{anchors}, to track the discriminator's predictions of the real and generated images.
The computation of the anchors $\alpha_R$ and $\alpha_F$ is provided in Eq.~\eqref{eq:ema}.%the supplementary document.
We then use the identical objective $L_G$ described in Eq.~\eqref{eq:gan} for training the generator, and minimize the regularized objective $L_D$ for the discriminator:
\vspace{-2mm}
\begin{align}
\min_D L_D, L_D = -V_D + \lambda \reg(D),
\label{eq:loss_d}
\vspace{-2mm}
\end{align}
where $\reg$ is the proposed regularization term:
{\small
\begin{align}
\!\reg \!=\! \E_{\bx \sim \D} \! \big[ \|\Dx \!-\! \alpha_F\|^2 \big]
\!+\!\! \E_{\bz \sim p_{\bz}} \! \big[ \|D(G(\bz)) \!-\! \alpha_R\|^2 \big].
\label{eq:reg}
\vspace{-1mm}
\end{align}
}\noindent
At first glance, the objective in Eq.~\eqref{eq:loss_d} appears counterintuitive since the regularization term $\reg$ pushes the discriminator to mix the predictions of real and generated images, as opposed to differentiating them.
However, we show in \secref{lecam} that $\reg$ offers meaningful constraints for optimizing a more robust objective.
Moreover, we empirically demonstrate in \secref{exp} that with the appropriate weight $\lambda$, this simple regularization scheme 1) improves the generalization under limited training data, and 2) complements the existing data augmentation methods.

\Paragraph{Why moving averages?} Tracking the moving average of the prediction reduces the variance across mini-batches and stabilizes the regularization term described in Eq.~\eqref{eq:reg}. 
Intuitively, the moving average becomes stable while the discriminator's prediction gradually converges to the stationary point.
We find this holds for the GAN models used in our experiments (\eg \figref{dcurve}).
%
%To further enforce this, we can gradually increase the exponential decay factor in the moving average calculation to $1.0$.
%
We illustrate a general case of using two moving average variables $\alpha_R$ and $\alpha_F$ in \figref{overview}. 
In some cases, \eg in theoretical analysis, we may use a single moving average variable to track the predictions of either real or generated images.

\subsection{Connection to LeCam Divergence}
\label{sec:lecam}
We show the connection of the proposed regularization to the WGAN~\cite{wgan} model and an $f$-divergence called LeCam (LC)-divergence~\cite{le2012asymptotic} or triangular discrimination~\cite{vincze1981concept}.
Under mild assumptions, our regularization method can enforce WGANs to minimize the weighted LC-divergence.
We show that the LC-divergence 1) can be used for training GAN models robustly under limited training data, and 2) has a close relationship with the $f$-divergences used in other GAN models~\cite{goodfellow2014generative,lsgan,zhao2016energy}.

We first revisit the definition of the $f$-divergence. 
For two discrete distributions $Q(x)$ and $P(x)$, an $f$-divergence is defined as:
\vspace{-2mm}
\begin{equation}
\label{eq:fdivergence}
D_f(P \| Q) = \sum_x Q(x) f(\frac{P(x)}{Q(x)})
\vspace{-2mm}
\end{equation}
if $f$ is a convex function and $f(1)=0$. 
The $f$-divergence plays a crucial role in GANs as it defines the underlying metric to align the generated distribution $\pg$ and data distribution $\pd$. 
For instance, Goodfellow~\etal~~\cite{goodfellow2014generative} showed that the saturated GAN minimizes the JS-divergence~\cite{lin1991divergence} between the two distributions:
\vspace{-1mm}
\begin{align}
\label{eq:cg_goodfellow}
C(G) = 2JS(p_d \| p_g) - \log(4),
\vspace{-1mm}
\end{align}
\noindent
where $C(G)$ is the virtual objective function for the generator when $D$ is fixed to the optimal. 
Similarly, the LSGAN~\cite{lsgan} method leads to minimizing the $\chi^2$-divergence~\cite{pearson1900x} and the EBGAN~\cite{zhao2016energy} scheme minimizes the total variation distance~\cite{wgan}.

More recently, the Wasserstein distance~\cite{wgan}, which does not belong to the $f$-divergence family, introduces a different distribution measurement. 
However, the performance of WGANs and similar models, \eg~BigGAN~\cite{biggan}, deteriorates when the training data is limited.
We show that incorporating the proposed regularization into these GAN models improves the generalization performance, especially under limited training data.
Next, we show the connection between the regularized WGANs and LC-divergence.
\begin{figure}[t]
    \centering
    \includegraphics[width=0.8\linewidth]{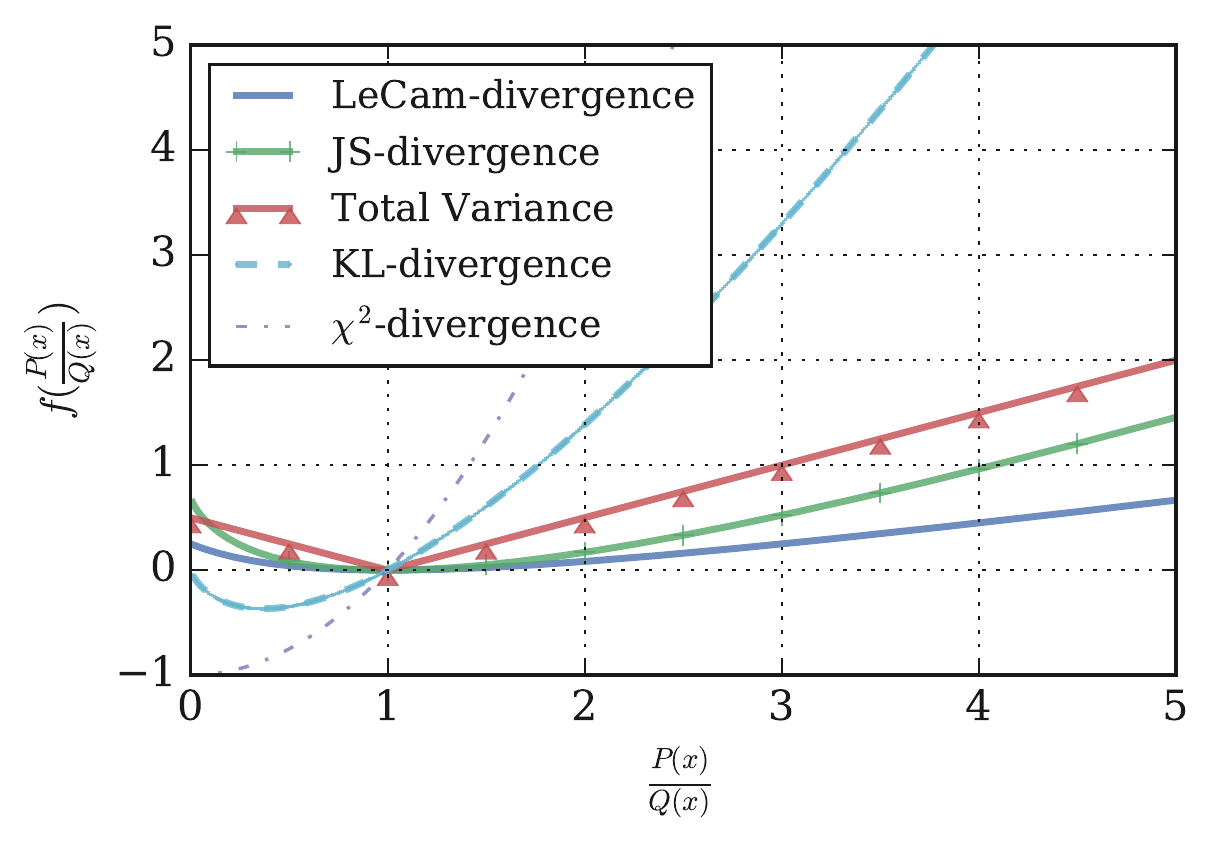}
    \vspace{-2mm}
    \vspace{\figcapmargin}
    \caption{\textbf{Comparison of various $f$-divergences.} The $x$- and $y$-axis denote the input and the value of the function $f$ in the $f$-divergence in Eq.~\eqref{eq:fdivergence}. For extremely large or small inputs of $P(x)/Q(x)$, LeCam-divergence yields the most robust values of $f(P(x)/Q(x))$. The weighted LeCam-divergence is plotted where the weight is $\frac{1}{2\lambda} - \alpha = \frac{1}{4}$.
    }
    \label{fig:lecam}
    \vspace{-3.5mm}
\end{figure}

\vspace{-1mm}
\begin{proposition}
\label{prop:lecam_connection}
Consider the regularized objective in Eq.~\eqref{eq:loss_d}~for the WGAN~\cite{wgan}, where $\reg$ is with a single anchor and $\lambda > 0$.
Assume that with respect to a fixed generator $G$, the anchor converges to a stationary value $\alpha$ ($\alpha > 0$).
Let $C(G)$ denote the virtual objective function of the generator for the fixed optimal $D$. We have:
\vspace{-1.5mm}
\begin{equation}
\label{eq:cg_ours}
C(G) = (\frac{1}{2\lambda}- \alpha) \Delta(p_d \| p_g),
\vspace{-2mm}
\end{equation}
where $\Delta(P \| Q)$ is the LeCam (LC)-divergence aka the triangular discrimination~\cite{le2012asymptotic} given by:
\vspace{-1.5mm}
\begin{equation}
\Delta(P \| Q) = \sum_{x} \frac{(P(x)-Q(x))^2}{(P(x)+Q(x))}.
\vspace{-2mm}
\end{equation}
\end{proposition}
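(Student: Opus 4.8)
The plan is to mirror the optimal-discriminator argument of Goodfellow~\etal, but applied to the WGAN critic carrying the quadratic anchor penalty $\reg$. First I would specialize Eq.~\eqref{eq:gan} to the WGAN by taking $f_D(t)=t$ and $f_G(t)=-t$, so that $V_D=\E_{x\sim p_d}[D(x)]-\E_{x\sim p_g}[D(x)]$, and then expand the regularized loss $L_D=-V_D+\lambda\reg(D)$ into a sum over $x$ of a real term, a fake term, and the quadratic anchor penalties of Eq.~\eqref{eq:reg}. I would hold $G$ (hence $p_g$) fixed and, per the stated assumption, freeze the exponential-moving-average anchor at its stationary value $\alpha$, treating it as a constant while optimizing over $D$.

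The first substantive step is to compute the optimal critic $D^*$ nonparametrically. Since $L_D$ decouples over $x$ and is, at each $x$, a strictly convex quadratic in the scalar $D(x)$ with positive leading coefficient $\lambda\,\bigl(p_d(x)+p_g(x)\bigr)$, setting the pointwise derivative to zero gives the unique minimizer
\begin{equation}
D^*(x)=o(x)+\frac{p_d(x)-p_g(x)}{2\lambda\,\bigl(p_d(x)+p_g(x)\bigr)},
\end{equation}
where $o(x)$ is the $(p_d,p_g)$-weighted blend of the frozen anchor value contributed by the two penalty branches. The strictly convex (quadratic) shape of $\reg$ is exactly what makes this optimum well defined, and the sum $p_d+p_g$ in the denominator --- the seed of the triangular discrimination --- arises precisely because the penalty acts on both the real and the generated branch rather than on one alone.

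Next I would substitute $D^*$ into the virtual objective $C(G)=V_D(D^*)=\sum_x\bigl(p_d(x)-p_g(x)\bigr)D^*(x)$ and simplify. The data-dependent part collapses immediately to $\frac{1}{2\lambda}\sum_x\frac{(p_d-p_g)^2}{p_d+p_g}=\frac{1}{2\lambda}\Delta(p_d\|p_g)$. For the offset part $\sum_x(p_d-p_g)\,o(x)$ --- where the anchor enters --- I would use the telescoping identities
\begin{equation}
\sum_x\frac{(p_d-p_g)\,p_d}{p_d+p_g}=\tfrac12\Delta(p_d\|p_g)=-\sum_x\frac{(p_d-p_g)\,p_g}{p_d+p_g},
\end{equation}
which hold because the two sums add to $\sum_x(p_d-p_g)=0$ and subtract to $\Delta$ by definition. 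These reduce the anchor's contribution to a single multiple of $\Delta$, while the same vanishing first moment $\sum_x(p_d-p_g)=0$ (the WGAN critic's shift-invariance) discards any flat constant, leaving $C(G)=\bigl(\frac{1}{2\lambda}-\alpha\bigr)\Delta(p_d\|p_g)$.

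The main obstacle is not the algebra but the two modeling steps folded into the ``mild assumptions'': that the critic may be optimized pointwise (the infinite-capacity / nonparametric limit) and that the moving-average anchor may be frozen at its stationary value $\alpha$ while differentiating in $D$. The secondary delicate point --- and the one I would check most carefully --- is the anchor bookkeeping: because the anchor enters $o(x)$ weighted by $p_d$ on the real branch and by $p_g$ on the fake branch, it is a genuinely $x$-dependent blend rather than a flat constant, so its net contribution to $V_D(D^*)$ is exactly $-\alpha\,\Delta(p_d\|p_g)$ instead of cancelling. Getting this sign and this proportionality to $\Delta$ right is what converts the bare WGAN value $\frac{1}{2\lambda}\Delta$ into the weighted LeCam-divergence and thereby exposes the claimed connection.
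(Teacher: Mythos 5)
Your proposal is correct and takes essentially the same route as the paper's proof: freeze the anchor at its stationary value, optimize the regularized quadratic loss pointwise over $D(x)$ to obtain $D^*(x)=\frac{(p_d-p_g)(\frac{1}{2\lambda}-\alpha)}{p_d+p_g}$, and substitute into the generator's virtual objective $C(G)=\sum_x (p_d-p_g)D^*(x)$. The only differences are cosmetic bookkeeping --- the paper completes the square before differentiating, while you differentiate directly and evaluate the anchor blend $o(x)$ via the telescoping identities --- and your sign handling of the cross-anchoring (real predictions pulled toward $-\alpha$, generated toward $+\alpha$) correctly produces the $-\alpha\,\Delta(p_d\|p_g)$ contribution, matching the paper's result.
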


Since the divergence is non-negative, we need $\lambda < \frac{1}{2 \alpha}$, which indicates the regularization weight should not be too large. 
The proof is given in \secref{appendix_proof}. %the supplementary materials.
We note that the analysis in Proposition~\ref{prop:lecam_connection}, which uses only a single anchor, is a simplified regularizer of our method described in \secref{method}.
To achieve better performance and more general applications, we use 1) two anchors and 2) apply the regularization term to the hinge~\cite{biggan,lim2017geometric} and non-saturated loss~\cite{goodfellow2014generative,stylegan2} in the experiments.
We note this is not a rare practice in the literature.
For example, Goodfellow~\etal~\cite{goodfellow2014generative} show theoretically the saturated GAN loss minimizes the JS-divergence.
However, in practice, they use the non-saturated GAN for superior empirical results.

After drawing the connection between LC-divergence and regularized WGANs, we show that the LC-divergence is a robust $f$-divergence when limited data is available.
\figref{lecam} illustrates several common $f$-divergences, where the $x$-axis plots the input to the function $f$ in Eq.~\eqref{eq:fdivergence}, \ie $P(x)/Q(x)$, and the $y$-axis shows the function value of $f$. 
Note that the input $P(x)/Q(x)$ is expected to be erroneous when limited training data is available, and likely to include extremely large/small values. 
\figref{lecam} shows that the LC-divergence helps obtain a more robust function value for extreme inputs.
In addition, the LC-divergence is symmetric and bounded between $0$ and $2$ which attains the minimum if and only if $p_d=p_g$.
These properties demonstrate the LC-divergence as a robust measurement when limited training data is available. 
This observation is consistent with the experimental results shown in \secref{exp}.

\vspace{-1mm}
\begin{proposition}[Properties of LeCam-divergence]
\label{prop:lecam_properties}
LC-divergence $\Delta$ is an $f$-divergence with following properties: 
\begin{compactitem}
    \item $\Delta$ is non-negative and symmetric.
    \item $\Delta(p_d\|p_g)$ is bounded, with the minimum 0 when $p_d=p_g$ and the maximum $2$ when $p_d$ and $p_g$ are disjoint.
    \item $\Delta$-divergence is a symmetric version of $\chi^2$-divergence, \ie
    $\Delta(P\|Q)= \chi^2(P \| M) + \chi^2(Q \| M)$, where $M=\frac{1}{2} (P+Q)$.
    \item The following inequalities hold~\cite{lu2015class}: $\frac{1}{4} \Delta(P,Q) \le JS(P,Q) \le \frac{1}{2} \Delta(P,Q) \le \frac{1}{2} TV(P, Q)$, where $JS$ and $TV$ represent JS-divergence and Total Variation. 
\end{compactitem}
\end{proposition}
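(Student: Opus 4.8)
The plan is to first confirm that $\Delta$ genuinely belongs to the $f$-divergence family of Eq.~\eqref{eq:fdivergence}, and then to verify the four bullet points essentially in the order listed, since each later property leans on the explicit form of the summand. To exhibit the generator, I would substitute $P(x)=t\,Q(x)$ with $t=P(x)/Q(x)$ into a single term, which factors as $Q(x)\,\frac{(t-1)^2}{t+1}$, identifying $f(t)=\frac{(t-1)^2}{t+1}$. I then check $f(1)=0$ and, for convexity on $t>0$, rewrite $f(t)=t-3+\frac{4}{t+1}$ so that $f''(t)=\frac{8}{(t+1)^3}>0$. This certifies $\Delta$ as an $f$-divergence and simultaneously makes the subsequent estimates transparent.

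For the first bullet, non-negativity and symmetry are immediate from the formula: each summand $\frac{(P-Q)^2}{P+Q}$ is a square over a non-negative denominator, and interchanging $P$ and $Q$ leaves both $(P-Q)^2$ and $P+Q$ fixed. For the second bullet I would use the algebraic identity $(P-Q)^2=(P+Q)^2-4PQ\le(P+Q)^2$ to bound $\frac{(P-Q)^2}{P+Q}\le P+Q$, whence $\Delta\le\sum_x\big(P(x)+Q(x)\big)=2$, with equality exactly when $P(x)Q(x)=0$ for every $x$, i.e. on disjoint supports; the lower bound $0$ is attained iff every numerator vanishes, i.e. $P=Q$, recovering the stated extrema.

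The third bullet reduces to one observation: with $M=\tfrac12(P+Q)$ one has $P-M=\tfrac12(P-Q)$, so $\chi^2(P\|M)=\sum_x\frac{(P-M)^2}{M}=\sum_x\frac{(P-Q)^2/4}{(P+Q)/2}=\tfrac12\Delta(P\|Q)$, and by symmetry $\chi^2(Q\|M)=\tfrac12\Delta(P\|Q)$; adding the two gives $\Delta(P\|Q)=\chi^2(P\|M)+\chi^2(Q\|M)$. For the fourth bullet, the single piece I would argue directly is $\Delta\le TV$ (equivalently $\tfrac12\Delta\le\tfrac12 TV$ under the unnormalized convention $TV=\sum_x|P-Q|$), which follows from $\frac{(P-Q)^2}{P+Q}=|P-Q|\cdot\frac{|P-Q|}{P+Q}\le|P-Q|$ and summation.

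The main obstacle is the remaining two-sided comparison $\tfrac14\Delta\le JS\le\tfrac12\Delta$ between the LC-divergence and the JS-divergence: unlike the other items it is not a one-line consequence of the definitions, since it requires relating the logarithmic integrand of $JS$ to the rational integrand of $\Delta$. I would handle this by invoking the established divergence inequalities of~\cite{lu2015class} rather than re-deriving them, which keeps the argument self-contained while deferring the sharp constants to the cited source.
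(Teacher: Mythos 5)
Your proof is correct, and it is worth noting that it supplies more than the paper itself does: the paper states Proposition~\ref{prop:lecam_properties} without proof, attributing the divergence inequalities to~\cite{lu2015class}, and its appendix proves only Proposition~\ref{prop:lecam_connection}. Your elementary route checks out at every step: the generator $f(t)=\frac{(t-1)^2}{t+1}$ satisfies $f(1)=0$ and is convex because $f(t)=t-3+\frac{4}{t+1}$ gives $f''(t)=\frac{8}{(t+1)^3}>0$ on $t>0$; the identity $(P-Q)^2=(P+Q)^2-4PQ$ yields both the bound $\Delta\le\sum_x\big(P(x)+Q(x)\big)=2$ and the exact equality condition (disjoint supports), while the minimum $0$ iff $P=Q$ is immediate; the observation $P-M=\tfrac12(P-Q)$ shows $\chi^2(P\|M)=\chi^2(Q\|M)=\tfrac12\Delta(P\|Q)$, which is in fact slightly sharper than the stated sum decomposition; and $|P-Q|\le P+Q$ gives $\Delta\le TV$ under the unnormalized convention $TV=\sum_x|P(x)-Q(x)|$. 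Your explicit flagging of that convention is a genuine service: with the normalized convention $TV=\tfrac12\sum_x|P(x)-Q(x)|$ the claimed inequality would fail at disjoint supports (where $\Delta=2$ but $TV=1$), so the unnormalized reading is the only one under which the last bullet is true. Deferring the two-sided comparison $\tfrac14\Delta\le JS\le\tfrac12\Delta$ to~\cite{lu2015class} is precisely what the paper does for the entire bullet, so this is not a gap; what the paper's citation-only treatment buys is brevity, while your derivation buys verifiable constants and equality conditions.
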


Proposition~\ref{prop:lecam_properties} shows that the LC-divergence is closely related to the $f$-divergences used in other GAN methods. For example,
it is a symmetric and smoothed $\chi^2$-divergence used in the LSGAN~\cite{lsgan}.
The weighted $\Delta$ lower bounds the JS-divergence used in the saturated GAN~\cite{goodfellow2014generative} and the Total Variation distance used in the EBGAN~\cite{zhao2016energy} approaches.
\begin{table*}[t]
    \caption{\textbf{Quantitative results on the CIFAR dataset.} We report the average FID scores ($\downarrow$) of three evaluation runs. The best performance is in \textbf{bold} and the second best is \underline{underscored}.}
    \vspace{\tabcapmargin}
    \label{tab:cifar}
    \centering
    \footnotesize
    \begin{tabular}{l|ccc ccc} 
	    \toprule
		\multirow{2}{*}{Methods} &  \multicolumn{3}{c}{CIFAR-10} & \multicolumn{3}{c}{CIFAR-100} \\
		\cmidrule(lr){2-4} \cmidrule(lr){5-7} & Full data & $20\%$ data & $10\%$ data & Full data & $20\%$ data & $10\%$ data \\
		\midrule
		Non-saturated GAN~\cite{goodfellow2014generative} & $9.83${\mytiny$\pm0.06$}  & $18.59${\mytiny$\pm0.15$} & $41.99${\mytiny$\pm0.18$} & $13.87${\mytiny$\pm0.08$} & $32.64${\mytiny$\pm0.19$} & $70.50${\mytiny$\pm0.38$}\\
		LS-GAN~\cite{lsgan} & \second{$9.07$}{\mytiny$\pm0.01$} & \second{$21.60$}{\mytiny$\pm0.11$} & \second{$41.68$}{\mytiny$\pm0.18$} & \second{$12.43$}{\mytiny$\pm0.11$} & $\best{27.09}${\mytiny$\pm0.09$} & $54.69${\mytiny$\pm0.12$}\\
		%WGAN-GP~\cite{wgangp} \\
		RaHinge GAN~\cite{relativisticgan}& $11.31${\mytiny$\pm0.04$} & $23.90${\mytiny$\pm0.22$} & $48.13${\mytiny$\pm0.33$} & $14.61${\mytiny$\pm0.21$} & \second{$28.79$}{\mytiny$\pm0.17$} & \second{$52.72$}{\mytiny$\pm0.18$}\\
		BigGAN~\cite{biggan} & $9.74${\mytiny$\pm0.06$} & $21.86${\mytiny$\pm0.29$} & $48.08${\mytiny$\pm0.10$} & $13.60${\mytiny$\pm0.07$} & $32.99${\mytiny$\pm0.24$} & $66.71${\mytiny$\pm0.01$} \\
	    BigGAN + $\reg$ (Ours) & $\best{8.31}${\mytiny$\pm0.05$} & $\best{15.27}${\mytiny$\pm0.10$} & $\best{35.23}${\mytiny$\pm0.14$} & $\best{11.88}${\mytiny$\pm0.12$} & $\best{25.51}${\mytiny$\pm0.19$} & $\best{49.63}${\mytiny$\pm0.16$}\\
		\bottomrule
    \end{tabular}
    \vspace{\tabbotmargin}
    \vspace{-1mm}
\end{table*}

\vspace{\secmargin}
\section{Experimental Results}
\label{sec:exp}
\vspace{\secmargin}
We conduct extensive experiments on several benchmarks to validate the efficacy of our method on training the leading class-conditional BigGAN~\cite{biggan} and unconditional StyleGAN2~\cite{stylegan2} models on the limited data.

% --------------------------%
\begin{figure}[t]
    \centering
    \includegraphics[width=0.99\linewidth]{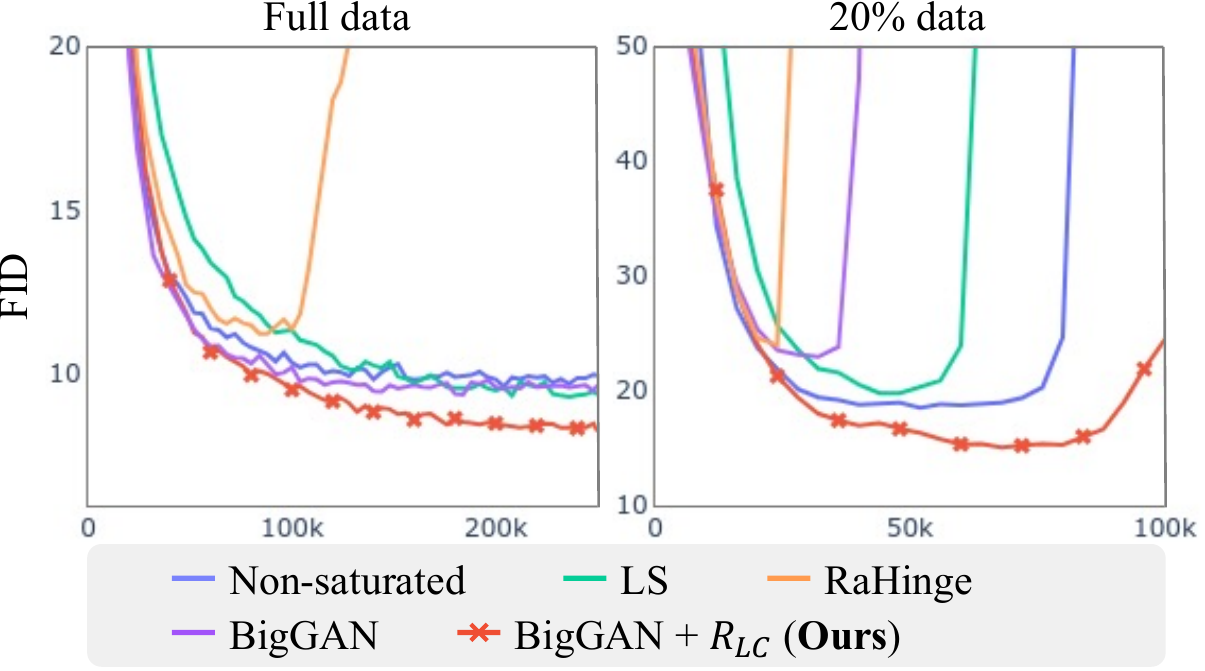}
    \vspace{\figcapmargin}
    \caption{\textbf{FID curves during the training on the CIFAR-10 dataset.} The proposed method 1) improves the best performance, and 2) stabilizes the training dynamic of the BigGAN model under the limited (\eg $20\%$) data setting.}
    \vspace{-1.5mm}
    \label{fig:fid_cifar}
\end{figure}
% --------------------------%

\Paragraph{Datasets.}
%The CIFAR 10/100~\cite{cifar} and ImageNet~\cite{imagenet} datasets are standard benchmarks for class-conditional generation models, while the \FFHQ~\cite{stylegan} dataset is commonly used for evaluating unconditional GANs.
%
%The resolutions of the images in the CIFAR, ImageNet, and \FFHQ~datasets are 32x32, 128x128, and 256x256, respectively.
The CIFAR 10/100~\cite{cifar} and ImageNet~\cite{imagenet} datasets are standard benchmarks for the image generation models.
The resolutions of the images in the CIFAR, ImageNet datasets are 32x32, and 128x128, respectively.

\Paragraph{Evaluation metrics.}
We use two common metrics: \textit{Inception Score (IS)}~\cite{salimans2016improved} and \textit{Fr\'echet Inception Distance (FID)}~\cite{ttur}.
Unless specified otherwise, we follow the evaluation protocol in the DA paper~\cite{diffaug} that reports the average and standard deviation values over three evaluation trials.

\Paragraph{Setups.}
We conduct the CIFAR experiments using the BigGAN~\cite{biggan} framework implemented by Zhao~\etal~\cite{diffaug}.\footnote{\label{note1}\scriptsize \url{https://github.com/mit-han-lab/data-efficient-gans}}
We train the BigGAN model on TPU for the ImageNet experiments.\footnote{\scriptsize \url{https://github.com/google/compare\_gan}}
%
%Finally, the StyleGAN2~\cite{stylegan2} framework is trained and evaluated on the \FFHQ~dataset using the implementation from Zhao~\etal~\cite{diffaug} and Karras~\etal~\cite{ada}.\textsuperscript{\ref{note1}}\footnote{\scriptsize \url{https://github.com/NVlabs/stylegan2-ada}}
Finally, the StyleGAN2~\cite{stylegan2} framework is trained and evaluated using the implementation from Zhao~\etal~\cite{diffaug} and Karras~\etal~\cite{ada}.\textsuperscript{\ref{note1}}\footnote{\scriptsize \url{https://github.com/NVlabs/stylegan2-ada}}
As for the hyper-parameter settings, we use the decay factor of $0.99$ for the exponential moving average variables.
%
%We set the regularization weight $\lambda$ to $0.3$, $0.01$, $3e-7$ for the CIFAR, ImageNet, FFHQ experiments, respectively.
We set the regularization weight $\lambda$ to $0.3$, $0.01$ for the CIFAR, ImageNet experiments, respectively.

\Paragraph{Baselines.} 
We compare three types of baseline methods on the CIFAR datasets.
The first group are GAN models that optimize various loss functions including \emph{\textbf{non-saturated}}~\cite{goodfellow2014generative}, \emph{\textbf{LS}}~\cite{lsgan}, and \emph{\textbf{RaHinge}}~\cite{relativisticgan}.
Second, we compare with three regularization methods: instance noise~\cite{sonderby2016amortised}, zero-centered gradient penalty (\emph{\textbf{GP-0}})~\cite{gp0} and consistency regularization (\emph{\textbf{CR}})~\cite{zhang2019consistency}.
Finally, we compare with two recent differentiable data augmentation methods \emph{\textbf{DA}}~\cite{diffaug} and \emph{\textbf{ADA}}~\cite{ada} that address the limited data issue for GANs.
%
%On the ImageNet and FFHQ datasets, we focus on comparing with the state-of-the-art methods due to high training costs on high-resolution images.
For the experiments on other datasets, we focus on comparing with the state-of-the-art methods.
For a fair comparison, we compare the baseline methods under the same GAN backbone using their official implementation on each dataset, except \tabref{sota_imagenet} in which we cite the numbers of~\cite{diffaug} reported in the original paper.

\begin{table}[t]
    \caption{\textbf{Comparison to GAN regularization methods.} We report the average FID ($\downarrow$) scores on the CIFAR datasets.}
    \label{tab:cifar_gp}
    \vspace{\tabcapmargin}
    \centering
    \setlength\tabcolsep{5pt}
    \footnotesize
    \begin{tabular}{@{}l|cccc@{}} 
	    \toprule
		\multirow{2}{*}{Method} & \multicolumn{2}{c}{CIFAR-10} & \multicolumn{2}{c}{CIFAR-100} \\
		\cmidrule(lr){2-3} \cmidrule(lr){4-5} & Full data & $20\%$ data & Full data & $20\%$ data \\
		\midrule
		BigGAN~\cite{biggan} & $9.74${\mytiny$\pm.06$} & $21.86${\mytiny$\pm.29$} & $13.60${\mytiny$\pm.07$} & $32.99${\mytiny$\pm.24$} \\
		+ noise~\cite{sonderby2016amortised} & $9.64${\mytiny$\pm.06$} & $21.87${\mytiny$\pm.11$} & $13.88${\mytiny$\pm.07$} & $32.38${\mytiny$\pm.01$} \\
		+ CR~\cite{zhang2019consistency} & $8.96${\mytiny$\pm.10$} & $20.62${\mytiny$\pm.10$} & $\best{11.59}${\mytiny$\pm.05$} & $36.91${\mytiny$\pm.12$} \\
		+ GP-0~\cite{gp0} & $10.30${\mytiny$\pm.16$} & $19.10${\mytiny$\pm.08$} & $14.67${\mytiny$\pm.08$} & $29.85${\mytiny$\pm.04$} \\
		+ $R_\mathrm{LC}$ (Ours) & $\best{8.31}${\mytiny$\pm.05$} & $\best{15.27}${\mytiny$\pm.10$} & $11.88${\mytiny$\pm.12$} & $\best{25.51}${\mytiny$\pm0.19$}\\
		\bottomrule
    \end{tabular}
    \vspace{\tabbotmargin}
    \vspace{-1mm}
\end{table}

% IS scores, should be in supp
\begin{comment}
\begin{table}[t]
    \caption{\textbf{Comparisons to zero-centered gradient penalty~\cite{salimans2016improved}.} We train the BigGAN~\cite{biggan} model as the backbone framework on the CIFAR-10 dataset and report the average FID ($\downarrow$) scores. The best performance is in \textbf{bold}.}
    \label{tab:cifar_rf}
    \centering
    \footnotesize
    \begin{tabular}{c|cc} 
	    \toprule
		Method & \multicolumn{2}{c}{Cifar-10} \\
		\cmidrule(lr){2-3} & & Full data & $20\%$ data \\
		\midrule
		Hinge & \\
		Hinge + GP~\cite{salimans2016improved} & \\
		LCHinge (Ours) & \\
		\bottomrule
    \end{tabular}
\end{table}
\end{comment}
%
\begin{table*}[t]
    \caption{\textbf{Quantitative comparisons to data augmentation.} We report the average FID ($\downarrow$) scores of three evaluation runs.}
    \vspace{\tabcapmargin}
    \label{tab:sota}
    \centering
    \footnotesize
    \begin{tabular}{l|cccccc} 
	    \toprule
	    %\multirow{2}{*}{Methods} & \multicolumn{2}{c}{CIFAR-10} & \multicolumn{2}{c}{CIFAR-100} & \multicolumn{2}{c}{\FFHQ} \\
	    \multirow{2}{*}{Methods} & \multicolumn{2}{c}{CIFAR-10} & \multicolumn{2}{c}{CIFAR-100} & \multicolumn{2}{c}{StyleGAN} \\
	    \cmidrule(lr){2-3} \cmidrule(lr){4-5} \cmidrule(lr){6-7} 
	    & Full & $10\%$ & Full & $10\%$ & Full & 1K \\
	    \midrule
	    BigGAN~\cite{biggan} + DA~\cite{diffaug} & $8.75${\mytiny$\pm0.05$} & $23.34${\mytiny$\pm0.28$} & $11.99${\mytiny$\pm0.10$} & $35.39${\mytiny$\pm0.16$} & - & - \\
	    BigGAN + DA + $\reg$ (Ours) & $\best{8.46}${\mytiny$\pm0.06$} & $\best{16.69}${\mytiny$\pm0.02$} & $\best{11.20}${\mytiny$\pm0.09$} & $\best{27.28}${\mytiny$\pm0.05$} & - & - \\
	    \midrule
	    StyleGAN2~\cite{stylegan2} + ADA~\cite{ada} & $2.68${\mytiny$\pm0.02$} & $6.72${\mytiny$\pm0.03$} & $3.04${\mytiny$\pm0.02$} & $14.06${\mytiny$\pm0.07$} & $3.82${\mytiny$\pm0.01$} & $23.27${\mytiny$\pm0.14$} \\
	    StyleGAN2 + ADA+ $\reg$ (Ours)& $\best{2.47}${\mytiny$\pm0.01$} & $\best{6.56}${\mytiny$\pm0.02$} & $\best{2.99}${\mytiny$\pm0.01$} & $\best{13.01}${\mytiny$\pm0.02$} & $\best{3.49}${\mytiny$\pm0.04$} & $\best{21.70}${\mytiny$\pm0.06$} \\
		\bottomrule
    \end{tabular}
    \vspace{\tabbotmargin}
    \vspace{-1mm}
\end{table*}

\subsection{Results on CIFAR-10 and CIFAR-100}
\label{sec:exp_1}
As shown in \tabref{cifar}, the proposed method improves the generalization performance of the BigGAN model.
The comparison between other GAN models shows the competitive performance of the proposed method, especially under limited training data.
These results substantiate that our regularization method minimizes a sensible divergence on limited training data.
To further understand the impact on the training dynamics, we plot the FID scores during the training stage in \figref{fid_cifar}.
The proposed method stabilizes the training process on limited data (\ie FID scores deteriorate in a later stage) and achieves the lowest FID score at the final iteration ($100$K).
This result suggests that our method can stabilize the GAN training process on limited data.

\begin{table*}[t]
    \caption{\textbf{Quantitative results on the ImageNet dataset}. We report the mean IS ($\uparrow$) and FID ($\downarrow$) scores of three training runs.}
    \vspace{\tabcapmargin}
    \label{tab:imagenet}
    \centering
    \footnotesize
    \begin{tabular}{l|cc cc cc } 
	    \toprule
        \multirow{2}{*}{Methods} & \multicolumn{2}{c}{Full data}  & \multicolumn{2}{c}{$50\%$ data}  & \multicolumn{2}{c}{$25\%$ data} \\ 
		\cmidrule(lr){2-3} \cmidrule(lr){4-5} \cmidrule(lr){6-7}
		& IS $\uparrow$ & FID $\downarrow$& IS $\uparrow$ & FID $\downarrow$ & IS $\uparrow$ & FID $\downarrow$ \\
		\midrule
		BigGAN~\cite{biggan} & $90.48${\mytiny $\pm12.7$} & $8.60${\mytiny$\pm1.08$} & $80.26${\mytiny$\pm5.55$} & $9.83${\mytiny$\pm0.94$} & $61.05${\mytiny$\pm6.43$} & $18.22${\mytiny$\pm2.59$} \\
	    BigGAN + $\reg$ (Ours)& $\best{93.00}${\mytiny$\pm3.27$} & $\best{7.27}${\mytiny$\pm0.14$} & $\best{89.94}${\mytiny$\pm6.67$} & $\best{9.13}${\mytiny$\pm0.84$} & $\best{65.66}${\mytiny$\pm4.96$} & $\best{14.47}${\mytiny$\pm1.73$} \\
		\bottomrule
    \end{tabular}
    \vspace{\tabbotmargin}
    \vspace{-1.5mm}
\end{table*}
We compare our method with three regularization methods: instance noise~\cite{sonderby2016amortised}, GP-0~\cite{gp0} and CR~\cite{zhang2019consistency} in \tabref{cifar_gp}.
Notice that the spectral norm regularization~\cite{spectral} is used by default in the BigGAN model~\cite{biggan}.
For the GP-0 method, we apply the gradient penalty only on real images.
Our regularization scheme performs favorably against these regularization methods, particularly under the limited data setting.
%As shown in \tabref{cifar_gp}, our regularization scheme performs favorably against these regularization methods, particularly under the limited data setting.
%
Despite the improvement under the limited data, the GP-0 approach degrades the FID performance when using the full training data.
We note that a similar observation is raised in the BigGAN paper~\cite{biggan}.
In \figref{gp}, we visualize the GP-0 values of the models trained with the GP-0 and our methods during the training stage.
Interestingly, the proposed method also constrains the GP-0 values, although it does not explicitly minimize the GP-0 loss.
\begin{figure}[t]
    \centering
    \includegraphics[width=0.99\linewidth]{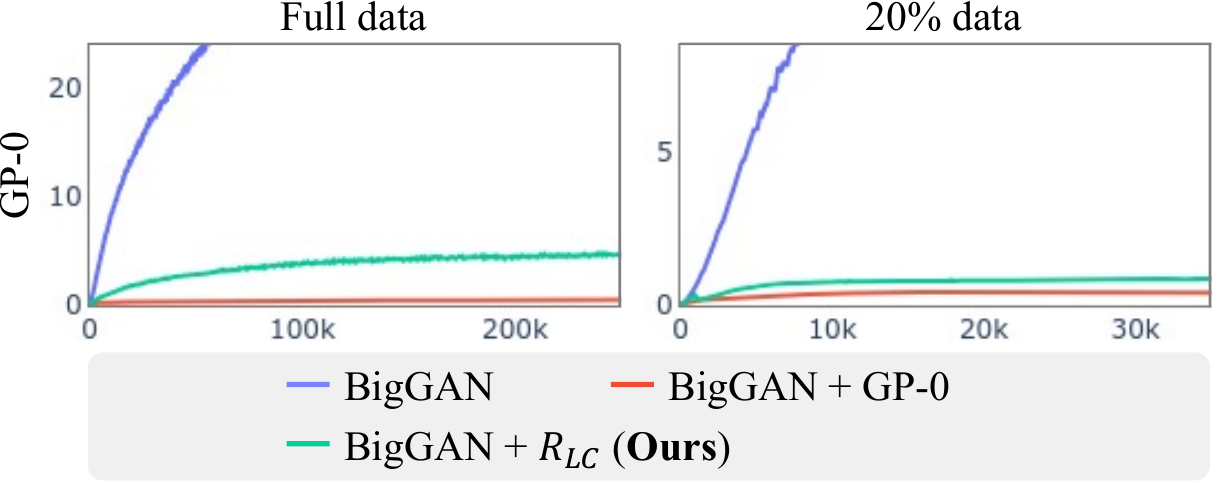}
    \vspace{\figcapmargin}
    \caption{\textbf{Zero-centered gradient penalty values.} We visualize the values of zero-centered gradient penalty (GP-0)~\cite{gp0} during the training stage. The proposed regularization also constrains the values without explicitly minimizing the GP-0 loss.
    }
    \vspace{-1mm}
    \label{fig:gp}
\end{figure}

Finally, we combine our regularization method with data augmentation and show it is complementary to the recent data augmentation methods~\cite{ada,diffaug}.
As presented in \tabref{sota}, the proposed approach improves the performance of DA and ADA, especially under the limited data settings.
Note that the data augmentation methods tackle the problem from different perspectives and represent the prior state-of-the-art on limited training data before this work.

\subsection{Comparison to State-of-the-art on ImageNet}
ImageNet~\cite{imagenet} is a challenging dataset since it contains more categories and images with higher resolution.
Considering the variance of the model performance, we follow the evaluation protocol in the BigGAN paper~\cite{biggan}.
Specifically, we run the training/evaluation pipeline three times using different random seeds, then report the average performance.
We present the quantitative results in \tabref{imagenet}.
The proposed method improves the resistance of the BigGAN model against the scarce training data issue (\eg $\downarrow3.75$ in FID under $25\%$ data).
It is noteworthy that the performance variance of our models is reduced in most cases (\eg $2.59 \rightarrow 1.73$ in FID under $25\%$ data), suggesting its capability in stabilizing the training process.

\tabref{sota_imagenet} demonstrates the quantitative results compared to the state-of-the-art model that uses the DA~\cite{diffaug} method.
Both the quantitative results and qualitative comparison presented in \figref{imagenet} validate that the proposed method complements the data augmentation approach.
We achieve state-of-the-art performance on the limited (\eg $10\%$) ImageNet dataset by combining our regularization and the data augment approaches.
% --------------------------%
\begin{figure*}[t]
    \centering
    \includegraphics[width=0.95\linewidth]{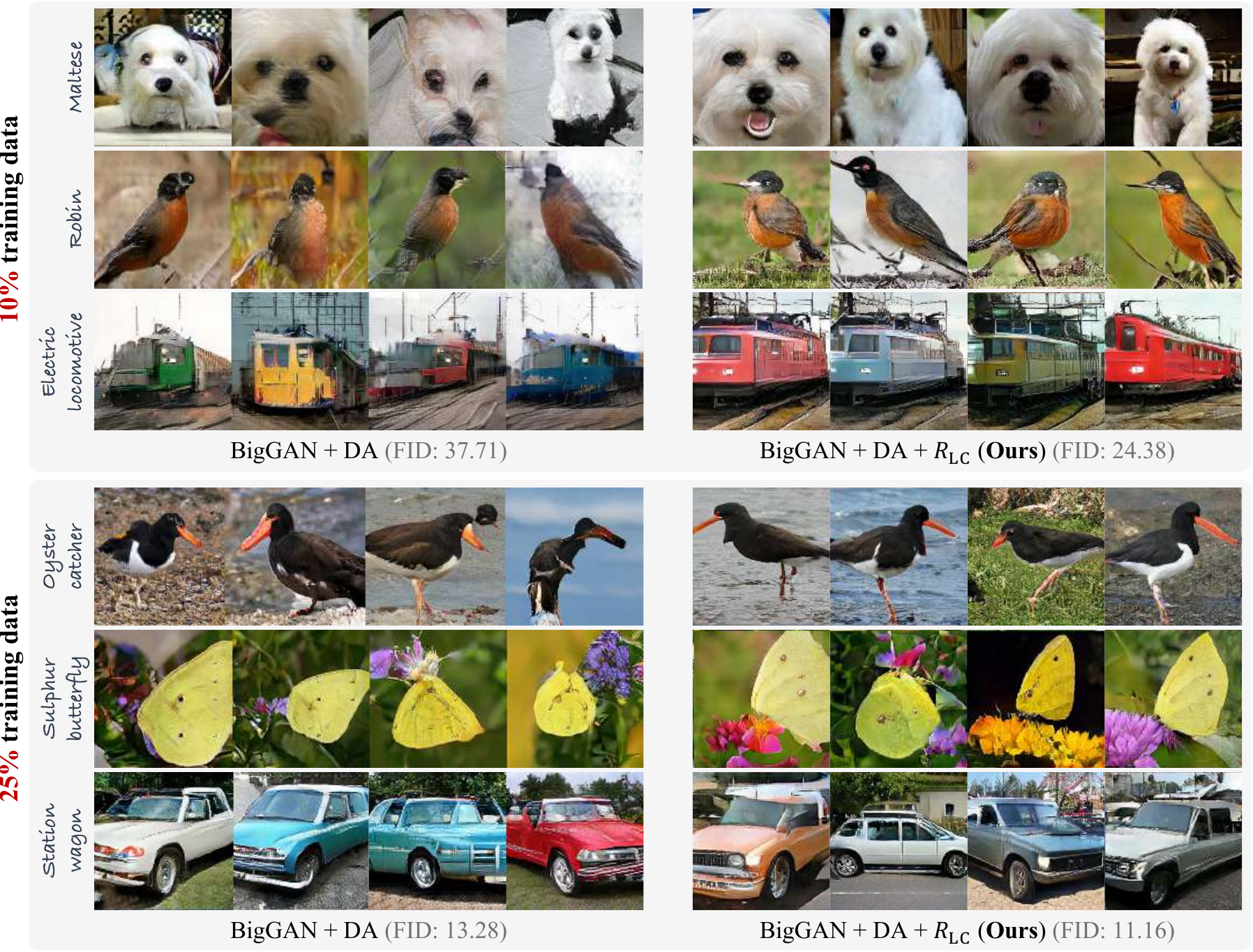}
    \vspace{\figcapmargin}
    \caption{\textbf{Qualitative comparisons under limited training data.} We show the generation results on the (\textit{top}) $10\%$ and (\textit{bottom}) $25\%$ ImageNet dataset. The baseline models trained with our approach synthesize more realistic images.}
    \vspace{-2mm}
    \label{fig:imagenet}
\end{figure*}
% --------------------------%
%                 100%      50%      25%      15%      10%
% baseline    :   7.62     9.64     25.37
% reproduce      (7.10)   (9.71)   (15.01)  (42.99)
% + DA        :   6.80     8.88     13.28
% reproduce      (8.94)   (9.39)   (12.51)  (25.02)  (37.71)
% + DA + Ours :   6.54     8.59     11.16    21.07    24.38
\begin{table*}[t]
    \caption{\textbf{Comparison to the state-of-the-art on the limited ImageNet training data.}. We train and evaluate the BigGAN~\cite{biggan} model following the same evaluation protocol in~\cite{diffaug}. $\dagger$ denotes the result is quoted from~\cite{diffaug}.
    }
    \vspace{\tabcapmargin}
    \label{tab:sota_imagenet}
    \centering
    \footnotesize
    \begin{tabular}{l|cc cc cc cc} 
	    \toprule
		\multirow{2}{*}{Methods} & \multicolumn{2}{c}{Full data} & \multicolumn{2}{c}{$50\%$ data} & \multicolumn{2}{c}{$25\%$ data} & \multicolumn{2}{c}{$10\%$ data} \\
		\cmidrule(lr){2-3} \cmidrule(lr){4-5} \cmidrule{6-7} \cmidrule{8-9} & IS $\uparrow$ & FID $\downarrow$& IS $\uparrow$ & FID $\downarrow$ & IS $\uparrow$ & FID $\downarrow$ & IS $\uparrow$ & FID $\downarrow$ \\
		\midrule
        DA~\cite{diffaug} (Zhao~\etal) & $100.8${\mytiny$\pm0.2^\dagger$} & $6.80${\mytiny$\pm0.02^\dagger$} & $\best{91.9}${\mytiny$\pm0.5^\dagger$} & $8.88${\mytiny$\pm0.06^\dagger$} & $74.2${\mytiny$\pm0.5^\dagger$} &  $13.28${\mytiny$\pm0.07^\dagger$} & $27.7${\mytiny$\pm0.1$} & $37.71${\mytiny$\pm0.11$}\\
        DA + $\reg$ (Ours) & $\best{108.0}${\mytiny$\pm0.6$} & $\best{6.54}${\mytiny$\pm0.03$} & $91.7${\mytiny$\pm0.6$} & $\best{8.59}${\mytiny$\pm0.01$} & $\best{84.7}${\mytiny$\pm0.5$} & $\best{11.16}${\mytiny$\pm0.05$} & $\best{42.3}${\mytiny$\pm0.3$}  & $\best{24.38}${\mytiny$\pm0.06$} \\
		\bottomrule
    \end{tabular}
    \vspace{\tabbotmargin}
    \vspace{-1mm}
\end{table*}

\begin{comment}
\begin{table*}[t]
    \caption{\textbf{Quantitative results of applying with data augmentation on the ImageNet dataset}. We use the BigGAN~\cite{} model as the backbone framework and report the average IS ($\uparrow$) and FID ($\downarrow$) scores of multiple training run.}
    \label{tab:sota_imagenet}
    \centering
    \footnotesize
    \begin{tabular}{lc|cc cc cc cc} 
	    \toprule
		Methods & $R_\mathrm{LC}$ & \multicolumn{2}{c}{Full data} & \multicolumn{2}{c}{$50\%$ data} & \multicolumn{2}{c}{$25\%$ data} & \multicolumn{2}{c}{$10\%$ data} \\
		\cmidrule(lr){3-4} \cmidrule(lr){5-6} \cmidrule{7-8} \cmidrule{9-10} & & IS $\uparrow$ & FID $\downarrow$& IS $\uparrow$ & FID $\downarrow$ & IS $\uparrow$ & FID $\downarrow$ & IS $\uparrow$ & FID $\downarrow$ \\
		\midrule
        BigGAN~\cite{biggan} + DA~\cite{diffaug} &  & $84.17${\mytiny$\pm6.91$} & $9.60${\mytiny$\pm0.52$} & $87.66${\mytiny$\pm6.62$} & $9.91${\mytiny$\pm0.51$} & $68.30${\mytiny$\pm2.37$} &  $14.65${\mytiny$\pm1.64$} \\
        & \checkmark & $86.75${\mytiny$\pm2.75$} & $9.31${\mytiny$\pm0.04$} & $88.58${\mytiny$\pm6.05$} & $9.99${\mytiny$\pm1.44$} & $76.68${\mytiny$\pm6.17$} & $12.95${\mytiny$\pm1.01$} & $38.2${\mytiny$\pm0.15$} & $27.51${\mytiny$\pm0.06$}\\
		\bottomrule
    \end{tabular}
\end{table*}
\end{comment}
\begin{table*}[t]
    %\caption{\textbf{Quantitative results on the \FFHQ~dataset.} We report the average FID ($\downarrow$) scores of three evaluation runs.}
    \caption{\textbf{Quantitative results of the StyleGAN2~\cite{stylegan2} model.} We report the average FID ($\downarrow$) scores of three evaluation runs.}
    \vspace{\tabcapmargin}
    \label{tab:ffhq}
    \centering
    \footnotesize
    \begin{tabular}{l|ccccc} 
	    \toprule
	    Method & $70$k images & $30$k images & $10$k images & $5$k images & $1$k images \\
	    \midrule
		StyleGAN2~\cite{stylegan2} & $3.79${\mytiny$\pm0.02$} & $6.19${\mytiny$\pm0.05$} & $14.96${\mytiny$\pm0.05$} & $25.88${\mytiny$\pm0.09$} & $72.07${\mytiny$\pm0.04$}\\
	    StyleGAN2 + $\reg$ (Ours) & $\best{3.66}${\mytiny$\pm0.02$} & $\best{5.78}${\mytiny$\pm0.03$} & $\best{14.58}${\mytiny$\pm0.04$} & $\best{23.83}${\mytiny$\pm0.11$} & $\best{63.16}${\mytiny$\pm0.11$} \\
		\bottomrule
    \end{tabular}
    \vspace{\tabbotmargin}
    \vspace{-1mm}
\end{table*}

\subsection{Comparison with Data Augmentation}% on \FFHQ}
%
%We use the \FFHQ~dataset to conduct the experiments on the unconditional image generation task.
We use the StyleGAN~dataset to conduct the experiments.% on the unconditional image generation task.
Experiment details are provided in \secref{appendix_details}. %the supplementary document.
As presented in \tabref{sota} and \tabref{ffhq}, the proposed method improves the performance of the StyleGAN2 model trained with(out) data augmentation~\cite{ada} in all cases.
We note that different from BigGAN, the StyleGAN2 model minimizes the non-saturated~\cite{goodfellow2014generative} GAN loss and uses the gradient penalty GP-0~\cite{gp0} in the default setting.
This shows that the proposed regularization scheme can be applied to other GAN loss functions along with existing regularization approaches.

We make a comparison in \tabref{compare_aug} to summarize the (dis)advantages of the data augmentation and our methods.
First, the data augmentation approaches yield more significant gain than the proposed method when the training data is extremely limited.
Nevertheless, our method can further improve the performance of data augmentation due to the complementary nature of the two methods.
%
%Second, the data augmentation approaches may degrade the performance when the training images are sufficiently diverse (\eg the full \FFHQ~dataset). This is consistent with the observation described in~\cite{ada}. 
Second, the data augmentation approaches may degrade the performance when the training images are sufficiently diverse (\eg the full dataset). This is consistent with the observation described in~\cite{ada}. 
In comparison, our regularization method may not suffer the same problem.
\begin{table}[t]
    %\caption{\textbf{Comparisons with data augmentation methods.} We report the FID ($\downarrow$) scores of the StyleGAN2 backbone on the \FFHQ~dataset.}
    \caption{\textbf{Comparisons with data augmentation methods.} We report the FID ($\downarrow$) scores of the StyleGAN2 backbone.}
    \vspace{\tabcapmargin}
    \label{tab:compare_aug}
    \centering
    \footnotesize
    \begin{tabular}{l|cc} 
	    \toprule
		Method & Full data & 1K data \\
		\midrule
		StyleGAN2~\cite{stylegan2} & $3.71${\mytiny$\pm0.01$} & $72.07${\mytiny$\pm0.04$} \\
		\midrule
		+ DA~\cite{diffaug} & $4.21${\mytiny$\pm0.03$} & $25.17${\mytiny$\pm0.09$} \\
		+ ADA~\cite{ada} & $3.81${\mytiny$\pm0.01$} & $23.27${\mytiny$\pm0.14$} \\
		\midrule
		+$R_\mathrm{LC}$ & $3.66${\mytiny$\pm0.02$} & $63.16${\mytiny$\pm0.11$} \\
		+ ADA + $R_\mathrm{LC}$ & $\best{3.49}${\mytiny$\pm0.04$} & $\best{21.70}${\mytiny$\pm0.06$} \\
		\bottomrule
    \end{tabular}
    \vspace{\tabbotmargin}
    \vspace{-1mm}
\end{table}

\subsection{Analysis and Ablation Studies}
\vspace{-1mm}
We use the BigGAN model and the CIFAR-10 dataset to conduct the analysis and ablation studies.

\Paragraph{Regularization strength for $R_\mathrm{LC}$.}
We conduct a sensitive study on the regularization weight $\lambda$.
As shown in \figref{modelsize}(b), weights greater than $0.5$ degrade the performance.
This agrees with our analysis in Eq.~\eqref{eq:cg_ours} that larger weights $\lambda$ result in negative divergence values. 
Generally, the proposed method is effective when the weight $\lambda$ is in a reasonable range, \eg $[0.1, 0.5]$ in \figref{modelsize}(b).
% 1:    19.43 +- 0.17
% 5e-1: ?
% 3e-1: 15.32 +- 0.06
% 1e-1: 17.61 +- 0.09
% 0:    21.86 +- 0.29
%
% (failure)
% 3e-2: 25.97 +- 0.14 XXX
% 1e-2: 32.25 +- 0.11 (ImageNet)
% 1e-3: 23.61 +- 0.16
% 1e-4: 20.99 +- 0.22

\Paragraph{Regularizing real \vs generated image predictions.}
Our default method regularizes the predictions of both real images $D(x)$ and generated images $D(G(z))$.
In this experiment, we investigate the effectiveness of separately regularizing the two terms $D(x)$ and $D(G(z))$.
As shown in \tabref{cifar_rf}, regularizing both terms achieves the best result.
\begin{table}[t]
    \caption{\textbf{Ablation study on regularizing real \vs generated image predictions.} We train and evaluate the BigGAN~\cite{biggan} model on the CIFAR-10 dataset, then report the average FID ($\downarrow$) scores.}
    \vspace{\tabcapmargin}
    \label{tab:cifar_rf}
    \centering
    \footnotesize
    \begin{tabular}{cc|cc}
         \toprule
         Real & Generated & Full data & $20\%$ data \\
         \midrule
         & & $9.74${\mytiny$\pm0.06$} & $21.86${\mytiny$\pm0.29$} \\
         \midrule
		 \checkmark & & $8.73${\mytiny$\pm0.04$} & $20.47${\mytiny$\pm0.36$} \\
		 & \checkmark & $8.79${\mytiny$\pm0.09$} & $18.18${\mytiny$\pm0.08$} \\
		 \checkmark & \checkmark & $\best{8.31}${\mytiny$\pm0.03$} & $\best{15.27}${\mytiny$\pm0.10$} \\
		 \bottomrule
    \end{tabular}
    \vspace{\tabbotmargin}
\end{table}

\Paragraph{Discriminator predictions.}
We visualize the discriminator predictions during training in \figref{dcurve}.
Without regularization, the predictions of real and generated images diverge rapidly as the discriminator overfits the limited training data.
On the other hand, the proposed method, as described in Eq.~\eqref{eq:reg}, penalizes the difference between predictions of real and generated images, thus keeping the predictions in a particular range.
This observation empirically substantiates that the discriminator's prediction gradually converges to the stationary point, and so do the moving average variables $\alpha_R$ and $\alpha_F$.
\begin{figure}[t]
    \centering
    \includegraphics[width=0.99\linewidth]{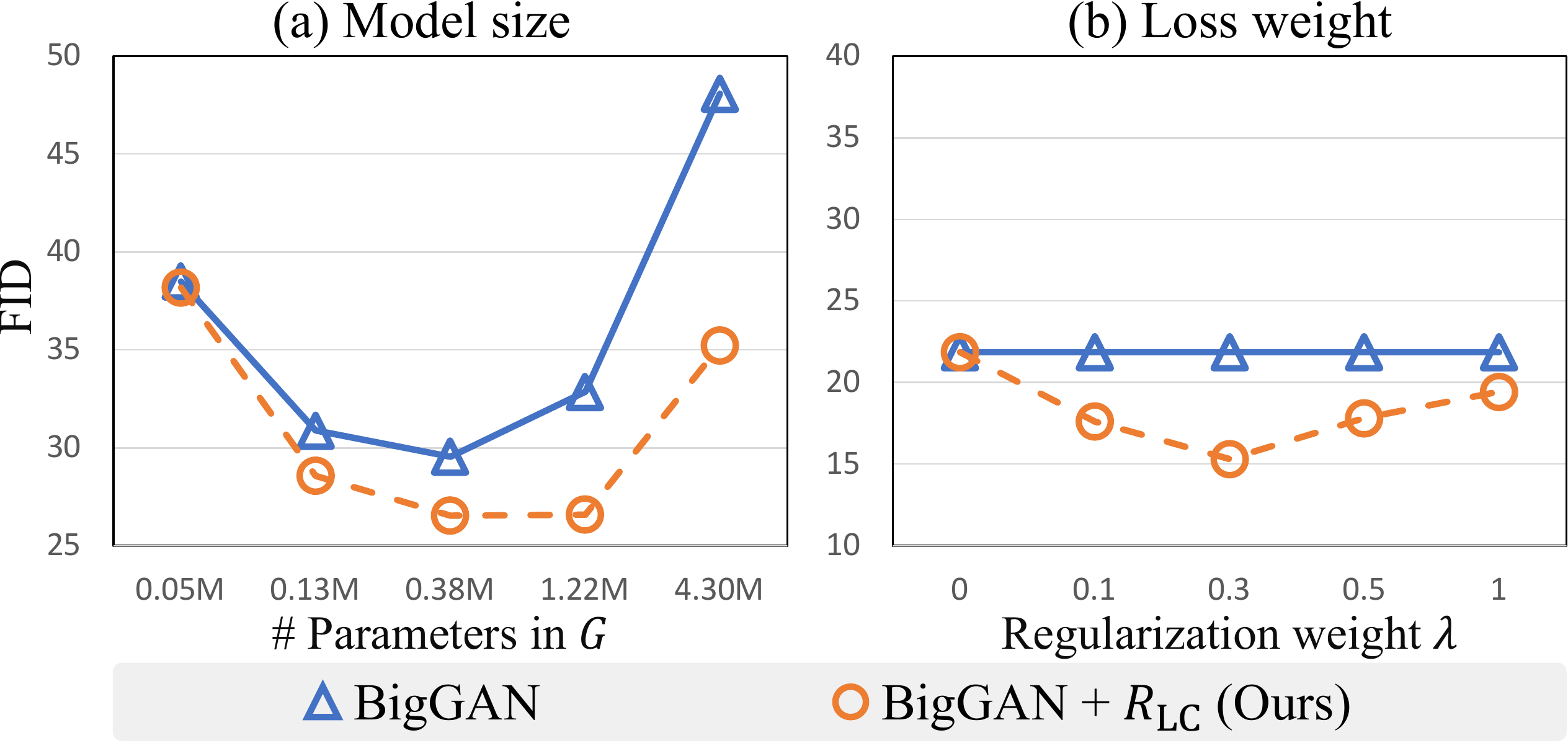}
    \vspace{\figcapmargin}
    \caption{\textbf{Different (a) model sizes and (b) regularization strengths.} The scores are computed on the (a) $10\%$ and (b) $20\%$ CIFAR-10 datasets.}
    \vspace{-2mm}
    \label{fig:modelsize}
\end{figure}

\Paragraph{Model size.}
Since reducing the model capacity may alleviate the overfitting problem, we investigate the performance of using a smaller model size for both generator and discriminator.
\figref{modelsize}(a) shows the results of progressively halving the number of channels in both the generator and discriminator.
The improvement made by our method increases with the model size, as the overfitting issue is more severe for the model with higher capacity.
%
% 64 (4304387): 48.08 +- 0.10 v.s. 41.68 +- 0.26
% 32 (1218307): 32.87 +- 0.39 v.s. 26.60 +- 0.19
% 16 (375683):  29.55 +- 0.11 v.s. 26.54 +- 0.14
% 8  (129475):  30.89 +- 0.15 v.s. 28.58 +- 0.16
% 4  (50147):   38.49 +- 0.25 v.s. 38.19 +- 0.24

\begin{figure}[t]
    \centering
    \includegraphics[width=0.99\linewidth]{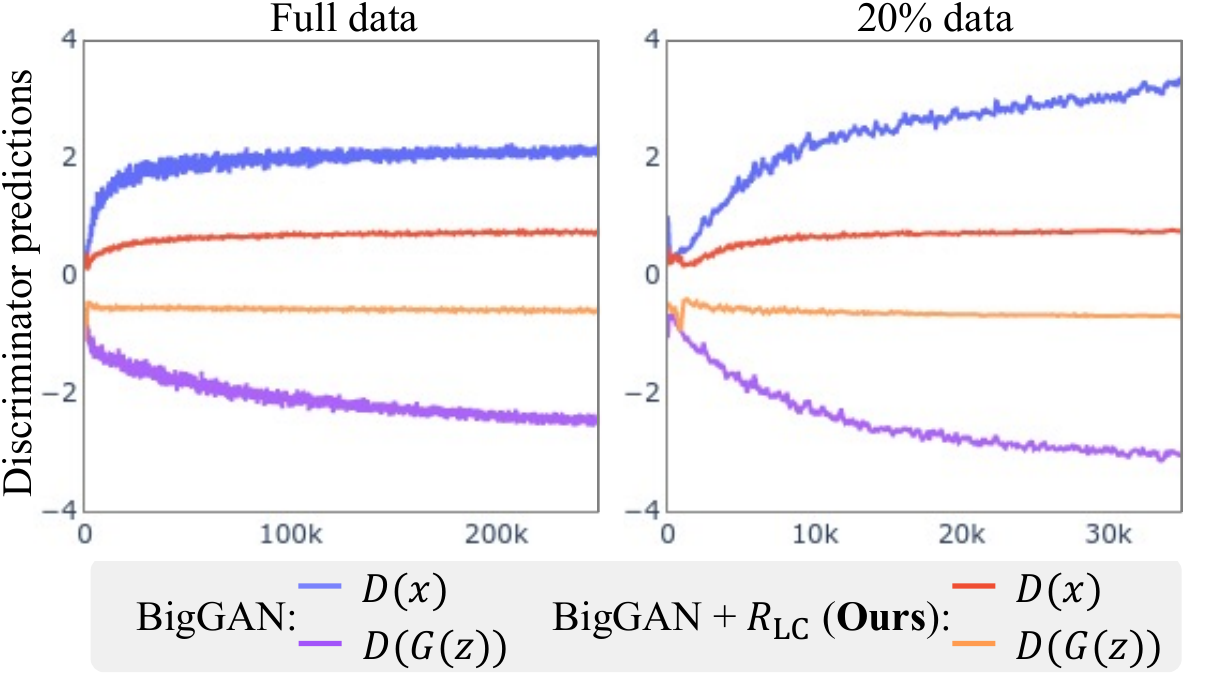}
    \vspace{\figcapmargin}
    \caption{\textbf{Discriminator predictions.} We visualize the discriminator predictions from the BigGAN model on the CIFAR-10 dataset during the training stage. The proposed method prevents the predictions of real images $D(x)$ and generated images $D(G(z))$ from diverging under the limited (\eg $20\%$) data setting.
    }
    \vspace{-2mm}
    \label{fig:dcurve}
\end{figure}

\vspace{\secmargin}
\section{Conclusion and Future Work}
\vspace{\secmargin}

In this work, we present a regularization method to train the GAN models under the limited data setting.
The proposed method achieves a more robust training objective for the GAN models by imposing a regularization loss to the discriminator during the training stage.
In the experiments, we conduct experiments on various image generation datasets with different GAN backbones to demonstrate the efficacy of the proposed scheme that 1) improves the performance of the GAN models, especially under the limited data setting and 2) can be applied along with the data augmentation methods to further enhance the performance.
%
%Since the training data scarcity issue also occurs in various applications that use conditional GANs (\eg real-to-artistic image translation), 
In future, we plan the training data scarcity issue for 1) the conditional GAN tasks such as image extrapolation, image-to-image translation, \etc, and 2) the robust GAN learning on large-scale noisy training data.

\vspace{\secmargin}
\section*{Acknowledgements}
\vspace{\secmargin}
We would like to thank anonymous reviewers for their useful comments. This work is supported in part by the NSF CAREER Grant \#1149783.

% --- reference --- %
{\small
\bibliographystyle{ieee_fullname}
\bibliography{egbib}
\balance
}

\clearpage
\onecolumn
\appendix
\section{Supplementary Materials}

% --- overview --- %
\subsection{Overview}
In this supplementary document, we first provide the theoretical justification for Proposition 1 in the paper.
Second, we describe the implementation details.
Finally, we present additional experimental results, including those of training the GAN model on only hundreds of images, \ie low-shot image generation.

\subsection{Theoretical Analysis}
\label{sec:appendix_proof}
\begin{proposition}
\label{prop:appendix_lecam_connection}
Consider the regularized objective in Eq.(1) and (2) in the paper for the WGAN~\cite{wgan}, where $\reg$ is with a single anchor and $\lambda > 0$. Assume that with respect to a fixed generator $G$, the anchor converges to a stationary value $\alpha$ ($\alpha > 0$). Let $C(G)$ denote the generator's virtual objective for the fixed optimal $D$. We have:
\begin{align}
C(G) = (\frac{1}{2\lambda}- \alpha) \Delta(p_d \| p_g),
\end{align} where $\Delta(P \| Q)$ is the LeCam-divergence aka the triangular discrimination~\cite{le2012asymptotic} given by:
\begin{align}
\Delta(P \| Q) = \sum_{x} \frac{(P(x)-Q(x))^2}{(P(x)+Q(x))}
\end{align}
\end{proposition}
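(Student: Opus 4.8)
The plan is to reduce the regularized minimax game to a pointwise optimization over the discriminator's output, solve it in closed form for a fixed generator, and then substitute the optimizer back into the generator's virtual objective to expose the triangular discrimination. First I would specialize the objective in Eq.~\eqref{eq:loss_d} to the WGAN, where $f_D(t)=t$ and $f_G(t)=-t$, so that in the population limit $V_D = \E_{\bx\sim\D}[\Dx]-\E_{\bz\sim p_{\bz}}[D(G(\bz))] = \sum_x (p_d(x)-p_g(x))\,D(x)$. With a single (symmetric) anchor, the regularizer becomes $\reg = \sum_x (p_d(x)+p_g(x))\,(D(x)-\alpha)^2$. I would emphasize that retaining \emph{both} the real and generated terms is essential: dropping either one leaves the Wasserstein contribution $\pm D(x)$ unbounded wherever the corresponding density vanishes, so the symmetric quadratic is exactly what makes the pointwise problem well posed.

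Next I would invoke the stated mild assumption to treat $D$ as an unconstrained function (i.e.\ relaxing the $1$-Lipschitz constraint of WGAN), which decouples $L_D=-V_D+\lambda\reg$ across $x$ into independent strictly convex quadratics in each $D(x)$. Setting the pointwise derivative to zero yields the optimal discriminator
\[
D^*(x) = \alpha + \frac{p_d(x)-p_g(x)}{2\lambda\,(p_d(x)+p_g(x))},
\]
which is well defined since $\lambda>0$. I would then substitute $D^*$ into the generator's virtual objective $C(G)$ and regroup. The two key identities are the normalization $\sum_x (p_d(x)-p_g(x))=0$, which annihilates the contributions that are merely linear in the constant anchor, and the observation that $\sum_x \frac{(p_d(x)-p_g(x))^2}{p_d(x)+p_g(x)}$ is precisely the LeCam-divergence $\Delta(p_d\|p_g)$. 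Collecting the surviving terms should produce a single multiple of $\Delta$, and carefully tracking the $\alpha$-dependence that re-enters through the anchor assembles the coefficient into the factor $\tfrac{1}{2\lambda}-\alpha$.

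The main obstacle I anticipate is the bookkeeping of the anchor $\alpha$, since it is not a free constant but the stationary value of an exponential moving average of the tracked predictions. I would use the stationarity assumption to pin it down, checking that the self-consistency condition $\alpha=\E[D^*]$ for the tracked (mixed) predictions is compatible with $\sum_x(p_d-p_g)=0$, so that $\alpha$ survives as a genuine parameter of the limit rather than cancelling. A secondary subtlety is justifying the unconstrained pointwise optimization for WGAN; I would fold the removal of the Lipschitz constraint into the stated mild assumption and note that strict convexity of each one-variable quadratic guarantees the interior optimizer above. Finally, non-negativity of the $f$-divergence forces $\tfrac{1}{2\lambda}-\alpha>0$, which immediately recovers the bound $\lambda<\tfrac{1}{2\alpha}$ noted after the proposition.
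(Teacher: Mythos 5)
Your overall strategy—relax the Lipschitz constraint, treat $\alpha$ as a constant near convergence, solve the pointwise strictly convex quadratic for $D^*$, and substitute back into the generator's virtual objective—is exactly the paper's route. But there is a genuine gap: you mis-specified the single-anchor regularizer, and the error is fatal to the stated coefficient. In the paper's Eq.~(4), real predictions are regularized toward the \emph{fake} anchor and generated predictions toward the \emph{real} anchor; with a single anchor $\alpha=\alpha_R=-\alpha_F>0$ this gives the cross form
\begin{equation*}
\reg(D)=\E_{\bx\sim p_d}\big[(\Dx+\alpha)^2\big]+\E_{\bx\sim p_g}\big[(\Dx-\alpha)^2\big],
\end{equation*}
whereas you took the symmetric form $\sum_x \big(p_d(x)+p_g(x)\big)\,(D(x)-\alpha)^2$, pulling both populations toward the same value. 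Under the cross form the pointwise optimum is $D^*(x)=\frac{(p_d(x)-p_g(x))(\frac{1}{2\lambda}-\alpha)}{p_d(x)+p_g(x)}$, and $C(G)=\sum_x(p_d(x)-p_g(x))D^*(x)=(\frac{1}{2\lambda}-\alpha)\,\Delta(p_d\|p_g)$ falls out immediately: the $-\alpha$ in the coefficient is produced by the cross-coupling, with no further bookkeeping needed.

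Under your symmetric form, by contrast, the optimum is $D^*(x)=\alpha+\frac{p_d(x)-p_g(x)}{2\lambda(p_d(x)+p_g(x))}$ (as you wrote), and then $C(G)=\sum_x(p_d-p_g)D^*=\alpha\sum_x(p_d-p_g)+\frac{1}{2\lambda}\Delta(p_d\|p_g)=\frac{1}{2\lambda}\Delta(p_d\|p_g)$: the very normalization identity you invoke, $\sum_x(p_d(x)-p_g(x))=0$, annihilates every $\alpha$-dependent term, so the coefficient $\frac{1}{2\lambda}-\alpha$ can never be assembled, no matter how carefully the bookkeeping is done. Your anticipated rescue—pinning $\alpha$ down via a self-consistency condition $\alpha=\E[D^*]$—does not help: whatever value $\alpha$ takes, it enters $C(G)$ only through the vanishing linear term (and for your $D^*$ the condition $\E_{p_d}[D^*]=\alpha$ would additionally require $\E_{p_d}\big[\frac{p_d-p_g}{p_d+p_g}\big]=0$, which is false in general). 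Your derivation does still exhibit a positive multiple of the LeCam divergence, so the qualitative connection survives, but it contradicts the proposition as stated and loses the consequence $\lambda<\frac{1}{2\alpha}$. The fix is not more careful tracking of $\alpha$ but the correct regularizer: the anchors must cross (real predictions toward $-\alpha$, generated predictions toward $+\alpha$), which is precisely how the paper defines $\reg$.
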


\begin{proof}
In the following, we use $\pd$ to denote the target distribution and simplify $\E_{\bz \sim \pz} \big[ D(G(\bz)) \big]$ using $\E_{\bx \sim \pg} \big[ \Dx \big]$. With a single anchor, the proposed regularization has the following form:
\begin{align}
\reg(D) = \E_{\bx \sim \pd}  \big[ \|\Dx + \alpha\|^2 \big] + \E_{\bx \sim \pg} \big[ \|D(\bx) - \alpha\|^2 \big],
\label{eq:reg_singleanchor}
\end{align}
where $\alpha \ge 0$ is the anchor for the real images, \ie $\alpha_R$ in the Equation (4) in the paper. Note that since $D(G(\bz)) \le 0$, when using a single anchor we have that $\alpha_R = - \alpha_F = \alpha$.

Consider the regularized objective of the discriminator:
\begin{align}
\min L(D) &= \min \E_{\bx \sim \pg} \big[ \Dx \big] - \E_{\bx \sim \pd} \big[ \Dx \big] + \lambda \reg(D) \\
&=\min \E_{\bx \sim \pg} \big[ \Dx \big] - \E_{\bx \sim \pd} \big[ \Dx \big] + \lambda \E_{\bx \sim \pd} \big[ \|\Dx + \alpha\|^2 \big] + \lambda \E_{\bx \sim \pg} \big[ \|\Dx - \alpha\|^2 \big] \\
&=\min \E_{\bx \sim \pd} \big[ \lambda\|\Dx + \alpha\|^2 - \Dx \big] + \E_{\bx \sim \pg} \big[ \lambda \|\Dx - \alpha\|^2 + \Dx \big]\\
&=\min \E_{\bx \sim \pd} \big[ \lambda\|\Dx + \alpha\|^2 - \Dx -\alpha + \frac{1}{4\lambda}\big] + \E_{\bx \sim \pg} \big[ \lambda \|\Dx - \alpha\|^2 + \Dx -\alpha + \frac{1}{4\lambda} \big]  + C\\
&=\min \lambda \E_{\bx \sim \pd} \big[\|\Dx+\alpha-\frac{1}{2\lambda}\|^2 \big] + \lambda \E_{\bx \sim \pg} \big[ \|\Dx+\frac{1}{2\lambda}-\alpha\|^2 \big] + C
\end{align}
where $C=2\alpha -\frac{1}{2\lambda}$.

We now derive the optimal discriminator $D^*$ with respect to a fixed $G$. According to the assumption, near convergence of $D^*$, $C$ approaches a constant value.
This is a mild assumption because we found that the discriminator predictions always converge to the stationary points in all of the experiments for both the WGAN and BigGAN models (\cf Figure 8 in the main paper).
Hypothetically speaking, in rare cases where this criterion might not hold, we may anneal the decay factor in the moving average $\alpha$ gradually to 1.0 near while $D$ approaches convergence. In the following, we treat $C$ as a constant value and compute $D^*$ from:
\begin{align}
\Dx^* = \argmin_D L(D) = \lambda \int_{\bx} \big[  \pd(\Dx+\alpha-\frac{1}{2\lambda})^2 + \pg (\Dx-\alpha+\frac{1}{2\lambda})^2 \big] dx \\
\frac{d L(D)}{d x} = 2 \lambda \big[ \pd (\Dx+\alpha-\frac{1}{2\lambda}) + \pg (\Dx-\alpha+\frac{1}{2\lambda}) \big] =0\\
\implies (\pd+\pg)\Dx + (\pd-\pg) (\alpha-\frac{1}{2\lambda}) =0 \\
\implies \Dsx = \frac{(\pd-\pg)(\frac{1}{2\lambda}-\alpha)}{\pd+\pg}
\end{align}

Consider the following generator's objective when $D$ is fixed:
\begin{align}
\min_G L(G) &= - \E_{\bx \sim \pg} \big[ \Dx \big] + \E_{\bx \sim \pd} \big[ \Dx \big]
\end{align}
Notice that as the regularization term is only added to the discriminator, and the generator's objective is kept the same.
Then we have:
\begin{align}
C(G) &=  \int_{x} \big[ \pd \Dsx -\pg \Dsx \big] dx\\
&= (\frac{1}{2\lambda}-\alpha) \int_{x} \frac{(\pd-\pg)^2}{\pd+\pg} dx\\
&= (\frac{1}{2\lambda}-\alpha) \Delta(\pd \| \pg) \label{eq:lecam},
\end{align}
where $\Delta$ is the LeCam divergence and $\frac{1}{2\lambda}-\alpha$ is the weight of the divergence. Since the divergence is non-negative, we need $\lambda < \frac{1}{2 \alpha}$. For example, if $\alpha=1$, then $\lambda < 0.5$.
This indicates the weight $\lambda$ in the proposed regularization term $\reg$ should not be too large.

The proof is then completed.
\end{proof}

\begin{figure}[t]
    \centering
    \includegraphics[width=0.5\linewidth]{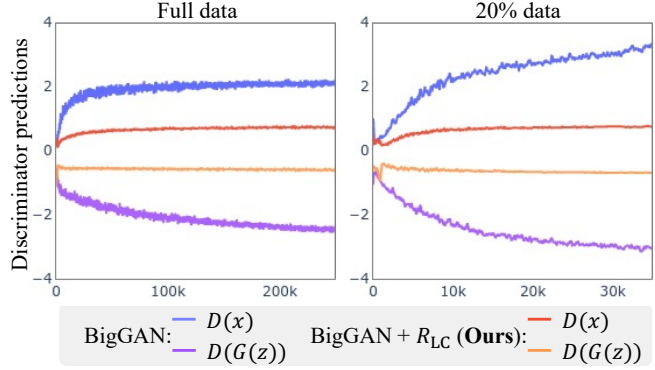}
    \vspace{\figcapmargin}
    \caption{\textbf{Discriminator predictions.} We visualize the discriminator predictions from the BigGAN model on the CIFAR-10 dataset during the training stage. The proposed method prevents the predictions of real images $D(x)$ and generated images $D(G(z))$ from diverging under the limited (\eg $20\%$) data setting.
    }
    \vspace{-2mm}
    \label{fig:appendix_dcurve}
\end{figure}

\Paragraph{Discussion on the theoretical results.}
Our method is inspired by the theoretical analysis in Proposition~\ref{prop:appendix_lecam_connection}.
In our experiments, we employ modifications to optimize the performance.
We note this is not a rare practice in the literature.
For example, Goodfellow~\etal~\cite{goodfellow2014generative} show theoretically the saturated GAN loss minimizes the JS-divergence.
However, in practice, they use the non-saturated GAN due to the superior empirical performance. 

Specifically, our method incorporates two modifications.
First, it uses two anchors for the discriminator predictions of both real and generated images.
Our ablation study in Table 6 in the main paper shows this leads to a performance gain.
Second, we extend our method to regularize other GAN losses in the leading-performing GAN models such as the BigGAN~\cite{biggan} and StyleGAN2~\cite{stylegan2} models.
The former has a similar objective as the WGAN that applies the hinge loss~\cite{lim2017geometric}.
Using a similar procedure in~\cite{wgan}, we might be able to extend the result in Proposition~\ref{prop:appendix_lecam_connection} when the discriminator predictions are within the margin boundaries.

We empirically substantiate the analysis by showing the proposed regularization prevents the discriminator predictions from diverging on the limited training data.
As shown in \figref{appendix_dcurve}, without regularization, the predictions of real and generated images diverge rapidly under the limited data setting. On the other hand, the proposed method keeps the predictions within -1 and +1.

\subsection{Implementation Details}
\label{sec:appendix_details}

\Paragraph{Exponential moving average.}
We implement the exponential moving average operation using the following formulation:
\begin{align}
    \label{eq:ema}
    \alpha^{(t)} = \gamma \times \alpha^{(t-1)} + (1-\gamma) \times v^{(t)},
\end{align}
where $\alpha$ is the moving average variable (\ie $\alpha_R$ and $\alpha_F$), $v^{(t)}$ is the current value at training step $t$, and $\gamma$ is the decay factor.
We fix the decay factor $\gamma$ to $0.99$ in all experiments.

\Paragraph{CIFAR-10 and CIFAR-100.} 
We set the weight $\lambda$ of regularization term to $0.3$, and adopt the default hyper-parameters of the baseline method in the implementation by Zhao~\etal~\cite{diffaug}.\footnote{\url{https://github.com/mit-han-lab/data-efficient-gans/tree/master/DiffAugment-biggan-cifar}}
Specifically, we use the batch size of $50$, learning rate of $2e-4$ for the generator $G$ and discriminator $D$, $4$ $D$ update steps per $G$ step, and \textit{translation + cutout} for the DA~\cite{diffaug} method.

\Paragraph{ImageNet.}
We use the Compare GAN codebase\footnote{\url{https://github.com/google/compare_gan}} for the experiments on the ImageNet dataset.
The random scaling, random horizontal flipping operations are used to pre-process the images.
We keep the default hyper-parameter settings for the baseline methods (\ie BigGAN~\cite{biggan}, BigGAN + DA~\cite{diffaug}).
As for our approach, we use the batch size of $2048$, learning rate of $4e{-4}$ for $D$ and $1e{-4}$ for $G$, $2$ $D$ update steps per $G$ step, and the regularization weight $\lambda$ of $0.01$.

%\Paragraph{\FFHQ.}
\Paragraph{Comparisons with Data Augmentation}
We train and evaluate the StyleGAN2~\cite{stylegan2} framework on the FFHQ~\cite{stylegan} dataset, where the image size is $256\times256$.
We set the regularization weight $\lambda$ to $3e-7$ in this experiment.
%The StyleGAN2~\cite{stylegan2} framework are used in all \FFHQ~experiments.
%
We use the ADA~\cite{ada} codebase\footnote{\url{https://github.com/NVlabs/stylegan2-ada}} and the DA~\cite{diffaug} source code\footnote{\label{supp_note4}\url{https://github.com/mit-han-lab/data-efficient-gans/tree/master/DiffAugment-stylegan2}} for the experiments shown in Table 3 and Table 6 in the paper, respectively.
Since the StyleGAN2 model uses the softplus mapping function for computing the GAN loss, the gradients of the discriminiator predictions around zero are much smaller than those in the BigGAN~\cite{biggan} model that uses the hinge function i.e., BigGAN: $0.8$, StyleGAN2: $10^{-3}$ in the last layer of the discriminator).
Therefore, we use a much smaller regularization weight $\lambda$ of $3e-7$.
Though the weight $\lambda$ is smaller on the FFHQ dataset, we can observe the impact of our method by comparing StyleGAN2 ($\lambda$=$0$) and StyleGAN2+$R_{LC}$ ($\lambda$=$3e-7$) in Table 3, 6 and 7 in the paper.
Moreover, the ablation study results in Fig 7(b) suggest our approach is relatively insensitive to the value of $\lambda$ under the same backbone.
As for the other hyper-parameters, we keep the setting used in the original implementations.

\begin{table*}[t]
    \caption{\textbf{Comparisons to WGAN on the CIFAR-10 dataset.} We report the average FID ($\downarrow$) scores of three evaluation runs.}
    \vspace{\tabcapmargin}
    \label{tab:supp_cifar_fid}
    \centering
    \small
    \begin{tabular}{l|cc cc cc cc} 
	    \toprule
		\multirow{2}{*}{Methods} & \multicolumn{2}{c}{WGAN~\cite{wgan}} & \multicolumn{2}{c}{WGAN + $R_\mathrm{LC}$ (Ours)} & \multicolumn{2}{c}{BigGAN~\cite{biggan}} & \multicolumn{2}{c}{BigGAN + $R_\mathrm{LC}$ (Ours)} \\
		\cmidrule(lr){2-3} \cmidrule(lr){4-5} \cmidrule(lr){6-7} \cmidrule(lr){8-9} & IS ($\uparrow$) & FID ($\downarrow$) & IS ($\uparrow$) & FID ($\downarrow$) & IS ($\uparrow$) & FID ($\downarrow$) & IS ($\uparrow$) & FID ($\downarrow$) \\
		\midrule
		Full CIFAR-10 & $7.86${\mytiny$\pm.07$} & $18.86${\mytiny$\pm.13$} & $7.98${\mytiny$\pm.02$} & $15.79${\mytiny$\pm.11$} & $9.07${\mytiny$\pm0.03$} & $9.74${\mytiny$\pm0.06$} & $\best{9.31}${\mytiny$\pm0.04$} & $\best{8.31}${\mytiny$\pm0.05$}\\
		\bottomrule
    \end{tabular}
    \vspace{\tabbotmargin}
    \vspace{-1mm}
\end{table*}

\begin{comment}
\begin{tabular}{l|ccc ccc} 
	    \toprule
		\multirow{2}{*}{Methods} &  \multicolumn{3}{c}{CIFAR-10} & \multicolumn{3}{c}{CIFAR-100} \\
		\cmidrule(lr){2-4} \cmidrule(lr){5-7} & Full data & $20\%$ data & $10\%$ data & Full data & $20\%$ data & $10\%$ data \\
		\midrule
		WGAN~\cite{wgan} & &  & & & & \\
		WGAN + $\reg$ (Ours) \\
		BigGAN~\cite{biggan} & $9.74${\mytiny$\pm0.06$} & $21.86${\mytiny$\pm0.29$} & $48.08${\mytiny$\pm0.10$} & $13.60${\mytiny$\pm0.07$} & $32.99${\mytiny$\pm0.24$} & $66.71${\mytiny$\pm0.01$} \\
	    BigGAN + $\reg$ (Ours) & $\best{8.31}${\mytiny$\pm0.05$} & $\best{15.27}${\mytiny$\pm0.10$} & $\best{35.23}${\mytiny$\pm0.14$} & $\best{11.88}${\mytiny$\pm0.12$} & $\best{25.51}${\mytiny$\pm0.19$} & $\best{49.63}${\mytiny$\pm0.16$}\\
		\bottomrule
    \end{tabular}
\end{comment}
\begin{table}[t]
    \caption{\textbf{IS scores on the CIFAR dataset.} We report the average IS scores ($\uparrow$) of three evaluation runs to supplement Table 1 in the paper. The best performance is in \textbf{bold} and the second best is \underline{underscored}.}
    \vspace{\tabcapmargin}
    \label{tab:supp_cifar_is}
    \centering
    \small
    \begin{tabular}{l|ccc ccc} 
	    \toprule
		\multirow{2}{*}{Methods} &  \multicolumn{3}{c}{CIFAR-10} & \multicolumn{3}{c}{CIFAR-100} \\
		\cmidrule(lr){2-4} \cmidrule(lr){5-7} & Full data & $20\%$ data & $10\%$ data & Full data & $20\%$ data & $10\%$ data \\
		\midrule
		Non-saturated GAN~\cite{goodfellow2014generative} & \second{$9.08$}{\mytiny$\pm0.11$} & $8.36${\mytiny$\pm0.09$} & \second{$7.80$}{\mytiny$\pm0.07$} & $10.58${\mytiny$\pm0.13$} & $8.75${\mytiny$\pm0.07$} & $5.96${\mytiny$\pm0.05$}\\
		LS-GAN~\cite{lsgan} & $9.05${\mytiny$\pm0.10$} & $8.50${\mytiny$\pm0.08$} & $7.33${\mytiny$\pm0.08$} & \second{$10.75$}{\mytiny$\pm0.08$} & \second{$8.94$}{\mytiny$\pm0.01$} & \second{$7.02$}{\mytiny$\pm0.11$}\\
		RaHinge GAN~\cite{relativisticgan}& $8.96${\mytiny$\pm0.05$} & \second{$8.52$}{\mytiny$\pm0.04$}  & $6.84${\mytiny$\pm0.04$} & $10.46${\mytiny$\pm0.12$} & $9.19${\mytiny$\pm0.08$} & $6.95${\mytiny$\pm0.07$} \\
		BigGAN~\cite{biggan} & $9.07${\mytiny$\pm0.03$} & \second{$8.52$}{\mytiny$\pm0.10$} & $7.09${\mytiny$\pm0.03$} & $10.71${\mytiny$\pm0.14$} & $8.58${\mytiny$\pm0.04$} & $6.74${\mytiny$\pm0.04$}\\
	    BigGAN + $\reg$ (Ours) & $\best{9.31}${\mytiny$\pm0.04$} & $\best{8.78}${\mytiny$\pm0.07$} & $\best{7.97}${\mytiny$\pm0.03$} & $\best{10.95}${\mytiny$\pm0.07$} & $\best{9.63}${\mytiny$\pm0.06$} & $\best{7.76}${\mytiny$\pm0.01$} \\
		\bottomrule
    \end{tabular}
    \vspace{\tabbotmargin}
    \vspace{-1mm}
\end{table}

\Paragraph{Reproducing results of previous methods.}
We obtain quantitatively comparable results in most experiments.
However, there are few cases that we fail to reproduce the results reported in the original paper.
First, compared to Table 3 in the DA~\cite{diffaug} paper, we obtain different results of training the StyleGAN2 model on the $5$k and $1$k \FFHQ~datasets, respectively.
Second, the result of training the StyleGAN model with the ADA~\cite{ada} method on the $1$k \FFHQ~dataset is slightly different from that reported in 7(c) in the ADA paper.
On the other hand, the result of training the StyleGAN2 model on the full \FFHQ~dataset is similar to that shown in the DA and ADA papers.
As a result, we argue that the different sets of limited data sampled for training the StyleGAN model (using the different random seeds) cause the performance discrepancy observed under the limited data setting.

\begin{table}[t]
    \caption{\textbf{Ablation study on exponential moving averages (EMAs).} We validate the impact of the EMAs by replacing the EMAs with the constant values. We train and evaluate the BigGAN model on the CIFAR dataset in this experiment.}
    \label{tab:ablation_ema}
    \centering
    \scriptsize
    \begin{tabular}{@{}c|c|cc@{}}
         \toprule
         FID($\downarrow$) & EMAs & [$\alpha_R$=$0.5$, $\alpha_F$=$-0.5$] & [$\alpha_R$=$1$, $\alpha_F$=$-1$] \\
         \midrule
         CIFAR 10 & $\best{15.27}${\mytiny$\pm0.10$} & $30.64${\mytiny$\pm0.05$} & $19.81${\mytiny$\pm0.03$}\\
         CIFAR 100 & $\best{25.51}${\mytiny$\pm0.19$} & $30.03${\mytiny$\pm0.11$} & $27.54${\mytiny$\pm0.07$}\\
		 \bottomrule
    \end{tabular}
\end{table}
% --------------------------%
\begin{figure}[t]
    \centering
    \includegraphics[width=0.95\linewidth]{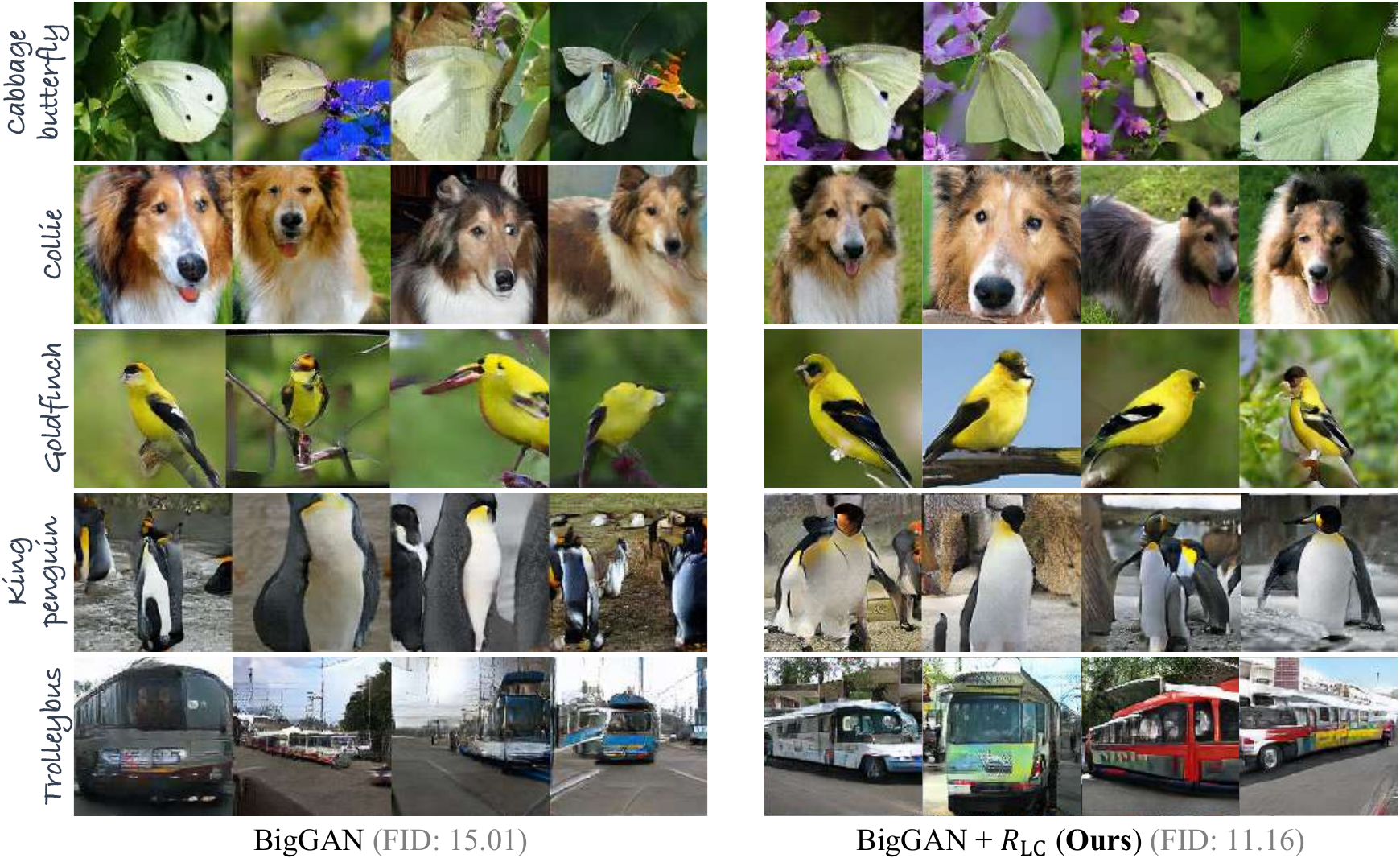}
    \vspace{\figcapmargin}
    \caption{\textbf{Qualitative comparisons under limited training data.} We show the generation results on the $25\%$ ImageNet dataset. The baseline models trained with our approach synthesize more realistic images.}
    \vspace{-2mm}
    \label{fig:supp_imagenet_1}
\end{figure}
% --------------------------%

\subsection{Additional Experimental Results}

\subsubsection{CIFAR-10 and CIFAR-100}
We report the results of WGAN on the CIFAR-10 dataset in \tabref{supp_cifar_fid}.
Although the proposed method is able to improve the performance of the WGAN model, the performance of the WGAN backbone is inferior to that of the BigGAN backbone and is also more sensitive to the hyperparameter setting.
Therefore, we use the BigGAN backbone in our CIFAR and ImageNet experiments. 
In addition, \tabref{supp_cifar_is} presents the IS scores to complement the FIS scores reported in Table 1 in the paper for the CIFAR experiments.

\Paragraph{Necessity of exponential moving averages (EMAs).}
We validate the necessity of the EMAs in the table below with the BigGAN model on the $20\%$ CIFAR datasets.
Specifically, we compute our regularization with constant anchors by setting 1) $\alpha_R$=1 and $\alpha_F$=$-1$ following the LS-GAN [43] 2) $\alpha_F$=$-0.5$ and $\alpha_R$=$0.5$ (\figref{appendix_dcurve} shows $\pm$ 0.5 is similar to the converged value of EMAs.)
The results in \tabref{ablation_ema} show that using EMAs empirically facilitates the discriminator to converge to the better local optimal.

\subsubsection{ImageNet}
We show additional qualitative comparisons between the baseline (\ie BigGAN~\cite{biggan}) and the proposed method (\ie BigGAN + $\reg$) in \figref{supp_imagenet_1}.
Combining the qualitative results shown in Figure 6 in the paper, we find that the proposed approach improves the visual quality of the generated images compared to the baseline models with and without data augmentation.

\begin{table}[t]
    \caption{\textbf{Quatitative results on the low-shot image generation datasets.} We report the average FID scores ($\downarrow$) of three evaluation runs. The best performance is \textbf{bold} and the second best is \underline{underscored}. Using the proposed regularization approach along with data augmentation to train the model on only \emph{100} (Obama, Grumpy cat, Panda), \emph{160} (Cat), or \emph{389} (Dog) images perform favorably against the transfer learning techniques that pre-train the model on \emph{70000} images.}
    \vspace{\tabcapmargin}
    \label{tab:lowshot}
    \centering
    \small
    \begin{tabular}{lc|ccc cc} 
	    \toprule
	    \multirow{2}{*}{Methods} & \multirow{2}{*}{Pre-training?} & \multicolumn{3}{c}{100-shot~\cite{diffaug}} & \multicolumn{2}{c}{AnimalFace~\cite{animalface}} \\
		\cmidrule(lr){3-5} \cmidrule(lr){6-7} & & Obama & Grumby cat & Panda & Cat & Dog \\
		\midrule
		Scale/shift~\cite{noguchi2019image} & \checkmark & $50.72$ & $ 34.20$ & $21.38$ & $54.83$ & $83.04$ \\
		MineGAN~\cite{minegan} & \checkmark & $50.63$ & $34.54$ & $14.84$ & $54.45$ & $93.03$\\
		TransferGAN~\cite{transfergan} & \checkmark & $48.73$ & $34.06$ & $23.20$ & $52.61$ & $82.38$ \\
		TransferGAN +DA~\cite{diffaug} & \checkmark & $39.85$ & $29.77$ & $17.12$ & $49.10$ & $65.57$ \\
		FreezeD~\cite{freezeD} & \checkmark & $41.87$ & $31.22$ & $17.95$ & $47.70$ & $70.46$ \\
		TransferGAN +DA & \checkmark & \second{$35.75$} & $29.32$ & $14.50$ & $46.07$ & $61.03$ \\
		\midrule
		StyleGAN2~\cite{stylegan2} & & $80.45${\mytiny$\pm.36$} & $48.63${\mytiny$\pm.05$} & $34.07${\mytiny$\pm.22$} & $69.84${\mytiny$\pm.19$} & $129.9${\mytiny$\pm.03$} \\
		StyleGAN2 + DA & & $47.09${\mytiny$\pm.14$} & \second{$27.21$}{\mytiny$\pm.03$} & \second{$12.13$}{\mytiny$\pm.07$} & \second{$42.40$}{\mytiny$\pm.07$} & \second{$58.47$}{\mytiny$\pm.06$} \\
		StyleGAN2 + DA + $R_\mathrm{LC}$ (\textbf{Ours}) & & $\best{33.16}${\mytiny$\pm.23$} &  $\best{24.93}${\mytiny$\pm.12$} & $\best{10.16}${\mytiny$\pm.05$} & $\best{34.18}${\mytiny$\pm.11$} & $\best{54.88}${\mytiny$\pm.09$} \\
		\bottomrule
    \end{tabular}
    \vspace{\tabbotmargin}
    \vspace{-1mm}
\end{table}
% --------------------------%
\begin{figure}[t]
    \centering
    \includegraphics[width=0.95\linewidth]{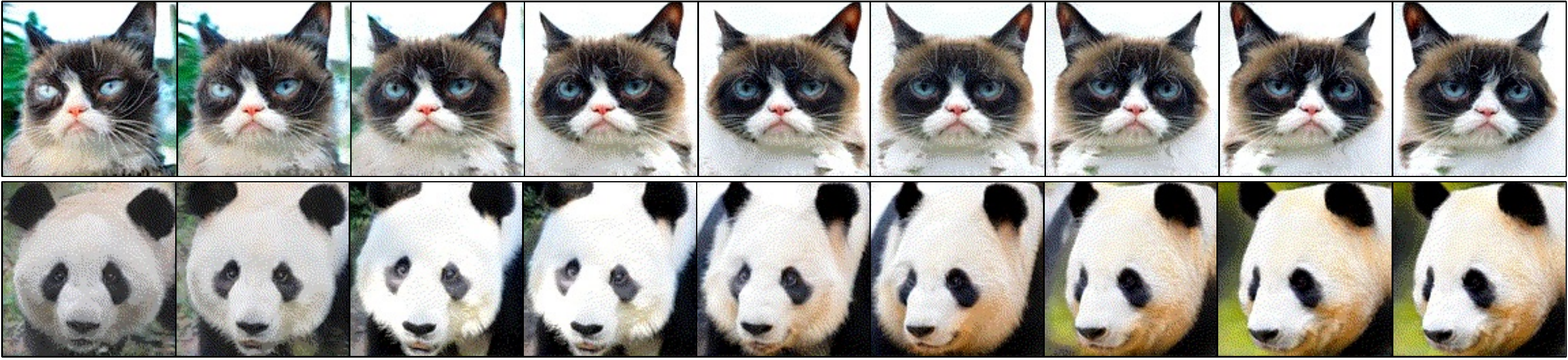}
    \vspace{\figcapmargin}
    \caption{\textbf{Low-shot generation results.} We train the StyleGAN2 model with the proposed regularization and data augmentation methods on the $100$ grumpy cat (\textit{top}), Obama (\textit{middle}), and panda (\textit{bottom}) images.}
    \vspace{-2mm}
    \label{fig:lowshot}
\end{figure}
% --------------------------%

\subsubsection{Low-Shot Image Generation}
In this experiment, we consider a more extreme scenario where only a few dozens of images are available for training a GAN model.
This setting is known as the \emph{low-shot image generation} problem~\cite{diffaug}.
Recent solutions focus on adapting an exiting GAN model pre-trained on other large datasets.
The adaptation strategies include optimizing the whole GAN model~\cite{transfergan}, modifying the batch statics~\cite{noguchi2019image}, using an additional mining network~\cite{minegan}, and fine-tuning parts of the GAN model~\cite{freezeD}.
We use the experimental setting in the DA~\cite{diffaug} paper that trains and evaluates the StyleGAN2 model on datasets that contain only $100$ (Obama, Grumpy cat, Panda), $160$ (Cat), or $389$ (Dog) images.\textsuperscript{\ref{supp_note4}}
We set the regularization weight $\lambda$ to $0.0001$.
The quantitative comparisons are shown in \tabref{lowshot}.
The StyleGAN2 model trained with the proposed regularization and data augmentation methods \emph{from scratch} performs favorably against the existing adaptation-based techniques.
Note that the adaptation-based approaches require to pre-train the StyleGAN2 model on the \FFHQ~dataset consisting of $70000$ images.
We also perform the interpolation in the latent space, and present the image generation results in \figref{lowshot}.

\end{document}